\newtheorem{theorem}{Theorem}[section]
\newtheorem{corollary}[theorem]{Corollary}
\newtheorem{remark}[theorem]{Remark}
\numberwithin{equation}{section}
\newcommand{\R}{\mathcal{R}}
\newcommand{\F}{\mathcal{F}}
\newcommand{\E}{\mathcal{E}}
\newcommand{\PA}{\mathcal{P}}
\newcommand{\A}{\mathcal{A}}
\newcommand{\1}{\textbf{1}}
\newcommand{\U}{\mathcal{U}}
\newcommand{\X}{\mathcal{X}}
\newcommand{\ZA}{\mathcal{Z}}
\newcommand{\dg}{\textbf{($\dagger$)}}
\begin{document}

\begin{center}
\textbf{\Large Lagrangian Index Policy for Restless  Bandits \\ \ \\ with Average Reward }
\end{center}

\bigskip

\begin{center}
Konstantin Avrachenkov\footnote{INRIA Sophia Antipolis,
2004 Route des Lucioles, B.P.93,
06902, Sophia Antipolis Cedex, France. Email: k.avrachenkov@inria.fr. The work is supported in part by Cefipra project `LION: Learning in Operations and Networks' from Indo-French Centre for Promotion of Advanced Research (IFCPAR).}, Vivek S.\ Borkar\footnote{Department of Electrical Engineering, Indian Institute of Technology Bombay, Powai, Mumbai 400076, India. Email: borkar.vs@gmail.com. The author is supported in part by a grant from Goole Research Asia.} and Pratik Shah\footnote{Department of Mechanical Engineering, Indian Institute of Technology Bombay, Powai, Mumbai 400076, India. (Now graduate student at Georgia Institute of Technology, USA.) Email: pratik2002shah@gmail.com}
\end{center}

\bigskip

\noindent \textbf{\large Abstract:} We study the Lagrangian Index Policy (LIP) for restless multi-armed bandits with long-run average
reward. In particular, we compare the performance of LIP with the performance of the Whittle Index Policy (WIP),
both heuristic policies known to be asymptotically optimal under certain natural conditions. Even though in most cases
their performances are very similar, in the cases when WIP shows bad performance, LIP continues to perform very well. 
We then propose reinforcement learning algorithms, both tabular and NN-based, to obtain online learning schemes for 
LIP in the model-free setting.
The proposed reinforcement learning schemes for LIP require significantly less memory than the analogous schemes for WIP.
We calculate analytically the Lagrangian index for the restart model, which applies to the optimal web crawling and the minimization
of the weighted age of information. 
We also give a new proof of asymptotic optimality in case of homogeneous arms as the number of arms goes to infinity, 
based on exchangeability and de Finetti's theorem.\\

\noindent \textbf{Key words:} multi-armed bandits; restless bandits; Lagrangian index; Whittle index; exchangeability\\

\section{Introduction}
\label{sec:intro}

\subsection{Background}

We consider restless multi-armed bandits with long-run average reward criterion as in \cite{Whittle}. These are 
Markov Decision Processes (MDPs) with two actions (`active' and `passive') coupled only through the constraint 
on the number of active actions. This model has numerous applications in resource allocation \cite{AvrachenkovAyesta,Liu}, 
queueing systems \cite{Aalto,Ansell,Duran}, web crawling and age of information \cite{Avrachenkov,Hsu,Tripathy}, clinical trials 
and other health applications \cite{Biswas,Verma,Vilar}, just to name a few. See the books \cite{GittinsBook,JackoBook,RuizBook} and the recent survey 
\cite{Nino-Mora} for thorough accounts of theory and applications of restless bandits.

The curse of dimensionality phenomenon makes the solution of the restless multi-armed bandit problem hard.
In fact, the problem is known to be PSPACE-hard \cite{Papadimitriou}, which is a class among very hard problems in the computational complexity hierarchy.
This motivated many researchers, starting from the seminal work of Whittle \cite{Whittle}, to look for efficient heuristics.
Whittle first relaxed the hard per stage constraint to an average constraint that requires only the asymptotic fraction of times the active mode was used to satisfy the constraint. This put the problem in the framework of classical constrained Markov decision processes, amenable to a Lagrange multiplier formulation. He used this to motivate the notion of subsidy, viz., a scalar $\lambda$ that is added to the running reward function when passive. He then defined indexability as the requirement that the set of states that are rendered passive under the optimal choice for the problem with subsidy, increase monotonically for each arm from the empty set to the entire state space as the subsidy is increased from $-\infty$ to $+\infty$ monotonically. If so, he assigned to each state an index, now called the {\it Whittle index}, equal to that value of subsidy for which both the active and passive actions are equally desirable for the given state. The heuristic policy he introduced, now called the Whittle Index Policy (WIP), is to order the observed states of the arms in decreasing values of the corresponding indices, settling ties according to some pre-specified rule, and then rendering top $M$ arms active, $M$ being the permitted number of active arms. While this prescription is arrived at via a relaxation of the original constraint, the resulting policy does satisfy the original per stage constraint. The policy nevertheless is a heuristic, but was shown to be asymptotically optimal by Weber and Weiss \cite{Weber} (see also \cite{Verloop})
under certain conditions---more about these conditions later---when the number of arms goes to infinity. 
This and the fact that it works very well for many problems in practice has made it a very popular heuristic.
The heuristic is also attractive because of its simplicity when the Whittle indices are explicitly known functions. There are many examples where they are, e.g., \cite{Avrachenkov}, \cite{Raghunathan}, \cite{Tripathy}, \cite{Yu}.  See also the review article \cite{Nino-Mora} for more examples.

That said, the problem still remains difficult when an explicit expression is unavailable, because the index needs a separate computation for \textit{each state}, see, e.g., \cite{Avrachenkov2}. In the original Lagrangian formulation, there is only a single variable to be learnt, viz., the Lagrange multiplier. But exact solution of the Lagrangian formulation is optimal for the relaxed constraint, not the original per stage constraint. This suggests a heuristic akin to the Whittle index, but based on the actual Lagrange multiplier, viz., to rank states at each time according to the discrepancy between the `Q-values' under active and passive actions under the Lagrange multiplier and render active the top $M$ arms. This index was introduced in \cite{Brown} in the finite horizon setting and mentioned in \cite{Gast} as a
possible heuristic for the infinite horizon with long-run average reward. We propose to call this new index the {\it Lagrangian index}. 
One more important advantage of the Lagrangian Index Policy (LIP) is that it does not require the indexability condition unlike the WIP. 

Given that the Lagrangian index is computationally much more amenable when an explicit expression for the Whittle index is not available, this makes it an attractive alternative for applications. Reinforcement learning scheme for LIP that we present here requires significantly less memory than the corresponding scheme
for WIP \cite{Avrachenkov2,Nakhleh,Pagare,Robledo1,Robledo2}.
On the theoretical side, we also have a novel proof  of independence and optimality for infinitely many homogeneous arms, based on de Finetti's theorem for exchangeable sequences of random variables. This may be of independent interest. Since LIP belongs to the family of LP-priority indices (see \cite{Verloop} for the definition and an extended treatment),
the proof of asymptotic optimality of LIP follows from the results in \cite{Verloop} under the uniform global attractor assumption. 
Our proof also uses the uniform global attractor assumption. However, it can be easily carried out for both inequality and equality constraints. 

The article is organized as follows. In the next section we formally introduce the model and notation, and define Whittle and Lagrangian indices. The next three sections are devoted to reinforcement learning algorithms for the Lagrangian index. 
In Section~\ref{sec:tabular} we present a tabular reinforcement learning algorithm for LIP and provide the proof of its convergence.
Then, in Section~\ref{sec:dqn} we describe a NN-based reinforcement learning algorithm for LIP and discuss its advantages with
respect to the analogous algorithms for WIP. 
In Section 2.3 we focus on the restart problem, which has numerous important practical applications, such as optimal web crawling and the minimization of the weighted age of information. We analytically calculate the Lagrangian index for the restart problem, which can be used for the optimal web crawling and the minimization of the weighted age of information. This then helps validate the algorithms for WIP and LIP by comparison in Section 3. We find that while both LIP and WIP yield comparable performances for the restart problem, LIP has a distinct computational edge.
We also consider a problem from literature that is provably not Whittle indexable and find that the LIP gives a better performance than the WIP. We consider the `restart problem' which has many important  practical applications. We find that while both LIP and WIP yield comparable performances for the restart problem, LIP has a distinct computational edge. Finally, 
in Section~\ref{sec:asympt} we give the second theoretical contribution of this article, viz., an alternative proof of the asymptotic optimality of LIP based on exchangeability. This may have applications going beyond the present work. 



\subsection{Model Formulation and Index Policies}
\label{sec:model}

Consider $N>1$ controlled discrete-time Markov chains ({\it arms} in bandit terminology) $X^i_n, n \geq 0,$ for $1\leq i \leq N$, 
on finite state space $\X$, controlled by the $\U := \{0,1\}$-valued control sequences $U^i_n, n \geq 0$. Given the control $U^i_n$, 
the $i$-th chain moves from the current state $X^i_n \in \X$ to the next state $y \in \X$ according to the transition probability 
kernel $p^i(y|X^i_n,U^i_n)$ and obtains the reward $r^i(X^i_n,U^i_n)$. 
We seek to solve the following long-run average reward MDP problem:\\

\noindent \textbf{($P_0$)} Maximize the long-run average reward
\begin{equation}
\lim_{n\to\infty}\frac{1}{n}E\left[\sum_{m=0}^{n-1}\sum_{i=1}^N r^i(X^i_m,U^i_m)\right], \label{MDP0}
\end{equation}
subject to the constraint
\begin{equation}
\sum_{i=1}^N U^i_n = M, \quad \forall n \geq 0, \label{exact}
\end{equation}
for a prescribed $1 \leq M < N$. The constraint (\ref{exact}) means that we need to set exactly
$M$ arms active at each time step.\\

We shall consider the Whittle relaxation \cite{Whittle} of this problem, 
where we replace \eqref{exact} by an average constraint. Thus the problem becomes\\

\noindent \textbf{($P_1$)}  Maximize the average reward
\begin{equation}
\lim_{n\to\infty}\frac{1}{n}E\left[\sum_{m=0}^{n-1}\sum_{i=1}^N r^i(X^i_m,U^i_m)\right], \label{MDP0}
\end{equation}
subject to the constraint
\begin{equation}
\lim_{n\to\infty}\frac{1}{n}E\left[\sum_{m=0}^{n-1}\sum_{i=1}^N I\{U^i_m=1\}\right] = M, \label{constraint}
\end{equation}
for a prescribed $1 \leq M < N$. \\

Here we use `$\lim$' instead of `$\liminf$' using the known fact from the theory of constrained MDPs that the former suffices under stated assumptions \cite{Altman,Ross,Piunov}. We shall refer to \eqref{MDP0} as the primary reward and \eqref{constraint} as the secondary reward. It also follows from the equivalent linear programming formulation \cite{BorkarConstrained}, (also, \cite{BorkarTopics}, section VII.1) that this problem is amenable to a Lagrangian formulation, with an equivalent unconstrained MDP (see Section 4.1, \cite{Bertsekas} for an exposition of Lagrange multipliers for equality constraints):\\

\noindent \textbf{($P_2$)}  Maximize the reward 
$$
\lim_{n\to\infty}\frac{1}{n}E\left[\sum_{m=0}^{n-1}\sum_{i=1}^N\left(r^i(X^i_m,U^i_m) + \lambda_*\left(I\{U^i_m=1\}- \frac{M}{N}\right)\right)\right], 
$$
where $\lambda_*$ is the Lagrange multiplier. \\

This separates the original MDP on $S^N$ into $N$ individual identical MDPs on $S$ that seek to maximize
\begin{equation}
\lim_{n\to\infty}\frac{1}{n}E\left[\sum_{m=0}^{n-1}\left(r^i(X^i_m,U^i_m) + \lambda_*\left(I\{U^i_m=1\}- \frac{M}{N}\right)\right)\right],
\quad i=1,...,N. \label{MDP}
\end{equation}
\\
This is possible because both the objective function and the constrained function are separable in the arms. 
We assume that these individual MDPs are irreducible under all stationary policies.

Define the `Q-values' $Q^i(x,u), \ i=1,...,N, \ x \in S, \ u \in \U$, by
$$
Q^i(x,u) = u r^i(x,1) + (1-u) (\lambda_* + r^i(x,0)) + \sum_y p^i(y|x,u) \max_v Q^i(y,v).
$$

Let us now define the {\it Lagrangian index} for the chains as follows:
\begin{equation}
\gamma^i(x) := Q^i(x,1) - Q^i(x,0). \label{Lindex}
\end{equation}
This is same as the `LP-index' of \cite{Gast}, because the LP formulation and the Lagrangian formulation for $(P_1)$ are equivalent.
The Lagrangian index also belongs to the set of {\it priority policies} defined in \cite{Verloop}.
 \\

Compare this with the Whittle index which is based on a subsidy $\lambda$ given for staying passive, and the index for state $x$ is the value of subsidy that makes the active and passive modes equally desirable. In particular, the Q-values in this case depend on the subsidy $\lambda$. If we denote this dependence explicitly by writing $Q^i_\lambda(x,u)$ for the Q-values, then the Whittle index for state $x$ is that value of $\lambda=\lambda(x)$ for which $\lambda$ is a solution of $Q^i_\lambda(x,1) - Q^i_\lambda(x,0)=0$. 
This is precisely the condition under which both control choices are equally preferred, because their Q-values coincide. This is a system of equations that needs to be solved for \textit{each} $x$. This requires a separate iterative scheme for each state $x$  unless some other method allows one to explicitly compute the Whittle indices directly. This is usually not possible or easy without some additional structural properties in the problem that facilitate this route.

In contrast, the Lagrangian index for $x$ would correspond to 
$\gamma^i(x) = Q^i_{\lambda_*}(x,1) - Q^i_{\lambda_*}(x,0)$. This makes the computation of Lagrangian index much easier.  
Specifically, it needs a single scalar `dual' iteration in the primal-dual scheme. Compare this with the situation for Whittle index, where, except for the cases such as \cite{Avrachenkov} where an explicit expression for Whittle index is available (and these are relatively very few), one needs one such iteration for \textit{each} state \cite{Avrachenkov2}. \\

\section{Learning Algorithms for Lagrangian Index}

In this section, we describe two reinforcement learning algorithms for the Lagrangian index problem. The first is tabular Q-learning which is amenable to exact analysis, but may suffer from the curse of dimensionality for large problems. Then we discuss a Deep Q-learning scheme that tries to alleviate the curse of dimensionality by using a parametrization for the $Q$ function. It lacks rigorous convergence guarantees, though it has been found to work well in practice. In the last subsection, we consider a popular test MDP for such algorithms, the restart problem, where we can explicitly calculate the Lagrangian index. Further it serves not only as a useful benchmark for our learning algorithms, but also as an example of a model based reinforcement learning algorithm.  We describe only the tabular case explicitly, a similar exercise can be carried out for the Deep Q-learning scheme.

\subsection{Tabular Q-Learning for Lagrangian Indices}
\label{sec:tabular}

In many practical situations the model ($p^i(y|x,u)$ and $r^i(x,u)$) is not known exactly or known  only approximately.
This motivates us to develop model-free learning algorithms for the Lagrangian index. We propose two  model-free reinforcement
learning algorithms: the first algorithm learns the Lagrangian index and does not enforce the hard constraint (\ref{exact}).
This algorithm can be used for instance when a simulator is available. The second algorithm respects the hard
constraint (\ref{exact}) during the learning process. Both algorithms are based on two-time scale primal-dual iterations for 
the saddle point reformulation of the Lagrangian relaxation \cite{Altman,Bertsekas}:
\begin{equation}
\label{saddle}
\lambda_* = \arg \min_\lambda \max_\varphi L(\varphi,\lambda),
\end{equation}
where 
$$
L(\varphi,\lambda) = 
\lim_{n\to\infty}\frac{1}{n}E_\varphi\left[\sum_{m=0}^{n-1}\sum_{i=1}^N\left(r^i(X^i_m,U^i_m) + \lambda \left(I\{U^i_m=1\}- \frac{M}{N}\right)\right)\right]
$$
is the Lagrangian. 

The primal iterates correspond to RVI Q-learning \cite{Abounadi} and are carried out on the faster time scale.
Namely, the Q-values are updated according to 
$$
Q_{n+1}^i(x,u)  =  Q^i_n(x,u) + a(\nu(x,u,n,i))I\{X_n = x, Z_n = u\}\times
$$
\begin{equation}
\label{QupdateA1}
\Big( u r(x,1) + (1-u) (\lambda_n + r(x,0)) + \max_{v}Q^i_n(X_{n+1},v)  - f(Q^i_n) -  Q^i_n(x,u)\Big),
\end{equation}
where $X^i_{n+1} \sim p^i(\cdot|x,u)$, $\nu(x,u,n,i)$ counts the number of times $Q^i(x,u)$ has been updated till time $n$, 
and where
\begin{equation}
f(Q) = \frac{1}{2 |\X|}\sum_{i\in S}(Q(i,0) + Q(i,1)). \label{eff}
\end{equation}
\begin{remark} Without the term involving $f(\cdot)$, the iteration would match the Q-learning algorithm for discounted reward with discount factor $1$. In this case, the dynamic programming operator $F(\cdot)$ is not a contraction. The offset $f(Q^i_n)$ is required in order to stabilize the iterates and can be shown to converge a.s.\ to the optimal cost. This is not novel, it is a feature inherited from the classical `relative value iteration' algorithm for average reward MDPs. The choice of $f(\cdot)$  is not unique, see \cite{Abounadi}. It suffices to have $f$ that is Lipschitz and satisfies: for $\1 :=$ the constant vector of all $1$'s of dimension $p+1$, $f(\1) = 1$ and for $x = [x_1, \cdots , x_{p+1}]$, $f(x + c\1) = f(x) + c$ for all $c \in \R$. See \cite{Abounadi}, Assumption 2.2. Standard examples are $f(x) = x_i$ for a fixed $i, 1 \leq i \leq p+1$, $f(x) := \min_ix_i, f(x) = \max_ix_i$ and $f(x) := \frac{1}{p+1}\sum_ix_i$.
\end{remark}  
The Lagrange multiplier is estimated by the dual iterates on the slower time scale:
\begin{equation}
\label{lambdaupdateA1}
\lambda_{n+1} = \lambda_n - \beta(n) \left(\sum_{i=1}^{N} U^i_n-M\right),
\end{equation}
where $U^i_n$ is selected according to $\epsilon$-greedy policy, i.e.,
\begin{equation}
\label{eps-greedy}
U^i_n = \left\{ \begin{array}{ll}
\arg \max_u Q^i(X^i_n,u), & \mbox{w.p. \ } 1-\epsilon,\\
\mbox{Uni}(\U), & \mbox{w.p. \ } \epsilon.
\end{array} \right.
\end{equation}
The step sizes $\{a(n)\}, \{\beta(n)\}$ need to satisfy the following conditions:
$$
\sum_n a(n) = \infty, \quad \sum_n a(n)^2 < \infty,
$$
\begin{equation}
\label{timesteps}
\sum_n \beta(n) = \infty, \quad \sum_n \beta(n)^2 < \infty, \quad \beta(n) = o(a(n)).
\end{equation}

The first algorithm is summarized in Algorithm~\ref{tab1} and its convergence is established in the following 
theorem.
\begin{theorem}\label{thm:A1}
Under the conditions (\ref{timesteps}) for time steps, 
the iterates (\ref{QupdateA1}) and (\ref{lambdaupdateA1})
converge a.s., i.e.,
$$
Q_n^i(x,u) \rightarrow Q_{\lambda_*}(x,u),
$$
and
$$
\lambda_n \rightarrow \lambda^*.
$$
as $n \rightarrow \infty$.
\end{theorem}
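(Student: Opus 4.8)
The plan is to read \eqref{QupdateA1}--\eqref{lambdaupdateA1} as a two--time--scale stochastic approximation (see, e.g., \cite{BorkarTopics} and the analysis of RVI schemes in \cite{Abounadi}): since $\beta(n)=o(\alpha(n))$, the $Q$--iterates live on the fast scale and the multiplier $\lambda_n$ on the slow scale, so that from the fast iteration's viewpoint $\lambda_n$ is quasi--static, while from the slow iteration's viewpoint the $Q$--iterates are already equilibrated. First I would dispose of a.s.\ boundedness of both iterates via the scaling (Borkar--Meyn) criterion: the $\infty$--limit of the $Q$--ODE below is $\dot Q=-Q$, which is globally attracted to the origin, and $\lambda_n$ stays bounded because the equilibrated drift $\sum_i P(U^i_n=1)-M$ opposes large excursions of $\lambda$ (for $\lambda\to+\infty$ the optimal per--arm policy makes every arm active so the drift is $N-M>0$, pushing $\lambda$ down; for $\lambda\to-\infty$ it makes every arm passive, drift $-M<0$, pushing $\lambda$ up). I would also record the standard asynchronous--SA requirement that every pair $(x,u)$ of every arm $i$ is updated infinitely often with relative frequencies bounded away from zero, which follows from the assumed irreducibility of the per--arm chains under all stationary policies together with $\epsilon>0$ in the $\epsilon$--greedy rule \eqref{eps-greedy}.

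Next, the fast scale. Freezing $\lambda_n\equiv\lambda$, the update \eqref{QupdateA1} is exactly asynchronous RVI $Q$--learning for the subsidy--$\lambda$ per--arm MDP \eqref{MDP} with offset $f$ as in \eqref{eff}. By \cite{Abounadi}, the associated ODE
$$ \dot Q^i \;=\; T^i_\lambda Q^i \;-\; f(Q^i)\,\1 \;-\; Q^i, $$
where $T^i_\lambda$ is the arm--$i$ average--reward $Q$--Bellman operator at subsidy $\lambda$ (the map appearing in the display preceding \eqref{Lindex}), has a unique globally asymptotically stable equilibrium $Q^i_\lambda$, characterized by $Q^i_\lambda = T^i_\lambda Q^i_\lambda - f(Q^i_\lambda)\1$. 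The two--time--scale theorem then gives $\|Q^i_n-Q^i_{\lambda_n}\|\to 0$ a.s. I would also check that $\lambda\mapsto Q^i_\lambda$ is (globally Lipschitz) continuous, since $\lambda$ enters the running reward affinely and the fixed--point equation contracts in the span seminorm under the stated irreducibility; note that the constant offset $f(\cdot)$ cancels in the differences $Q^i_\lambda(x,1)-Q^i_\lambda(x,0)$, which is precisely why the Lagrangian index \eqref{Lindex} is well defined.

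Then, the slow scale. Substituting the equilibrated values, \eqref{lambdaupdateA1} becomes a stochastic approximation with increment $\sum_i U^i_n-M$, where $U^i_n$ is $\epsilon$--greedy with respect to $Q^i_{\lambda_n}$. Averaging over the stationary behaviour of the (fast) arm chains under this policy, the conditional mean drift is $h_\epsilon(\lambda_n):=\sum_i\rho^i_\epsilon(\lambda_n)-M$ with $\rho^i_\epsilon(\lambda)$ the stationary probability of playing arm $i$ active. Writing $g(\lambda):=\max_\varphi L(\varphi,\lambda)$ for the dual function of $(P_1)$--$(P_2)$ --- convex in $\lambda$, being a pointwise maximum of affine functions --- one has $\partial g(\lambda)\ni\sum_i\rho^i_0(\lambda)-M$, so \eqref{lambdaupdateA1} is (a perturbation of) stochastic subgradient descent for $g$, with limiting differential inclusion $\dot\lambda\in-\partial g(\lambda)$ whose trajectories converge to $\arg\min_\lambda g$. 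By the linear--programming/Lagrangian equivalence for constrained MDPs \cite{BorkarConstrained,Altman,BorkarTopics} strong duality holds and $\lambda_*\in\arg\min_\lambda g$, so $\lambda_n\to\lambda_*$ and hence $Q^i_n\to Q^i_{\lambda_*}$.

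The hard part will be the exploration bias: the policy actually run is $\epsilon$--greedy rather than greedy, so $h_\epsilon\neq h_0$ and the slow ODE is only an $O(\epsilon)$ perturbation of $\dot\lambda\in-\partial g$; the clean remedy is to let $\epsilon=\epsilon_n\downarrow 0$ as a third, still slower time scale, so that the bias vanishes in the limit and exact convergence to $\lambda_*$ follows (the current statement implicitly requires this, or else convergence only to an $O(\epsilon)$--neighborhood). A second delicate point is the non--smoothness of $g$ at $\lambda_*$ --- where for some state active and passive are indifferent, the very indifference that defines the index --- which forces the use of stochastic approximation for differential inclusions rather than the smooth ODE method, and an argument that the internally chain--transitive invariant sets of $\dot\lambda\in-\partial g$ coincide with $\arg\min g$. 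Verifying the uniform continuity of $\lambda\mapsto Q^i_\lambda$ needed to invoke the two--time--scale theorem, and the relative--update--frequency condition for the asynchronous fast iteration, are the remaining technical but routine checks.
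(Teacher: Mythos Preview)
Your approach is essentially the same as the paper's: both invoke two--time--scale stochastic approximation with the fast scale reducing to the RVI $Q$--learning of \cite{Abounadi} at frozen $\lambda$, and the slow scale being stochastic subgradient descent on the convex dual $g(\lambda)=\max_\varphi L(\varphi,\lambda)$. You supply more of the technical scaffolding (boundedness via Borkar--Meyn, the asynchronous update--frequency condition, the differential--inclusion treatment of non--smoothness) and you correctly flag the $\epsilon$--greedy bias that the paper only acknowledges parenthetically; the paper's proof is explicitly a sketch and does not go beyond the outline you give.
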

\begin{proof} 
The proof is based upon the `two time scale stochastic approximation' argument summarized in the Appendix. (See \cite{asyn}, Section 8.1, for an extended pedagogical treatment.) We drop the superscript `$i$' in $Q^i_n$ and elsewhere  for notational ease in this proof.  To map the foregoing to that paradigm, observe that the iterates (3.2) for $\{Q_n\}$ are on a faster time scale governed by the step sizes $\{a(n)\}$, whereas the iterates (3.4) for $\{\lambda_n\}$ are on a slower time scale dictated by the slower step sizes $\{\beta(n)\}$.  Let $q = \{q_{i,u}, i \in S, u \in  \U\},$ vectorized using the lexicographical order so that $q \in \R^p$ where $p := 2|\X|$.
The $\{Q_n\}$-iterate can be rewritten as
$$Q_{n+1}(x,u) = Q_n(x,u) + a(\nu(x,u,n))I\{X_n=x,U_n=u\}\left(F_{i,u}(Q_n) + M_{i,u}(n+1)\right).$$
Here $F = \{F_{i,u}\}: \R^p \to \R^p$, vectorized lexicographically,  is given by
$$F_{x,u}(q, \lambda) := ur(x,1) + (1-u)\left[\lambda + r(x,0) + \sum_jp(y|x,u)\max_vq_{y,v} - f(q)\right],$$
where $f(\cdot)$ is an offset required to stabilize the iterates.
%
Rewrite \eqref{lambdaupdateA1} as
\begin{equation}
\label{lambdaupdateA*}
\lambda_{n+1} = \lambda_n - a(n)\left(\frac{\beta(n)}{a(n)}\right) \left(\sum_{i=1}^{N} U^i_n-M\right),
\end{equation}
Consider the time scale corresponding to stepsizes $\{a(n)\}$. As described in the Appendix,  $\{(Q_n, \lambda_n)\}$ will a.s.\ track the asymptotic behaviour of the ODE in $\R^p\times\R$ given by
\begin{equation}
\dot{q}(t) = F(q(t), \lambda) - q(t),  \ \dot{\lambda} = 0,\label{FPode}
\end{equation}
where the second equation follows from the fact that $\beta(n) = o(a(n))$. Hence  $\lambda$ can be treated as a fixed parameter, following the theory of two time scale and asynchronous stochastic approximation described in the Appendix. This is precisely the model studied in Section 3 of \cite{Abounadi}. By Lemma 3.8 of \cite{Abounadi}, we have 
\begin{equation}
\sup_n\|Q_n\| < \infty \label{qunibound}
\end{equation} 
a.s.\ Note also that $F(\cdot, \lambda)$ satisfies \eqref{twoscalebound}, from which \eqref{vanishingtail} follows by the martingale convergence theorem as pointed out in the Appendix.
 By Theorem 3.4 of \cite{Abounadi}, we have a.s.\ convergence of $\{Q_n\}$ to the unique solution $Q^\lambda \in \R^M$ of the fixed point equation $F(Q, \lambda) = Q$. (Note that the proof of this in \cite{Abounadi} is very involved.)

Next consider the scalar iterate $\{\lambda_n\}$ on the slow time scale. By the two time scale argument, 
it will satisfy the ODE
$$\dot{\lambda}(t) = g(\lambda(t)) - \lambda(t),$$
where $g(\lambda) :=$ the stationary expectation of $\sum_nU_n - M$.  From \eqref{eps-greedy} and using the fact that $\lambda_n$ is on the slower time scale, hence quasi-static on the fast time scale, we have
$$g(\lambda) = \sum_i\pi_\lambda(i)\left((1-\epsilon)argmax_uQ_\lambda(i,u) + \frac{1}{2}\epsilon\right),$$
where $\pi_{\lambda}$ denotes the stationary distribution of the Markov chain controlled by $\{U_n\}$ as in \eqref{eps-greedy} for $\lambda_n \equiv \lambda$.  (We use here the fact that $\{Q_n\}$ being iterated on a faster time scale leads to $Q_n(i,u) \approx Q_{\lambda_n}(i,u)$.) Note that $g(\cdot)$ is a bounded function, so the RHS of the above ODE will be $< 0$ (resp., $> 0$) for $\lambda \gg 0$ (resp., $\lambda \ll 0$). (A similar argument applied to the iterates themselves, along with \eqref{qunibound} above,  ensures the validity of \eqref{a.s.bound} in this context, i.e., $\sup_n| \lambda_n| < \infty$ a.s.)  A bounded trajectory of a scalar well-posed ODE must converge, because uniqueness dictates that it must be monotone---its direction of motion away from any equilibrium is fixed at every point by the vector field at the  point whence it cannot traverse the point in both directions. This implies that it will converge to a solution of $g(\lambda) = \lambda$. On the other hand, recalling our Lagrange multiplier formulation, the $\lambda$-iterate is nothing but a stochastic subgradient descent for minimizing $\max_\varphi L(\varphi, \cdot)$, which is convex. Thus this is a subgradient descent on a convex, in fact a piecewise linear function.
Along the lines of \cite{Avrachenkov2}, where a very similar situation arises, the iterates (\ref{lambdaupdateA1}) remain bounded 
and converge a.s.\ to a global minimum.

\end{proof}

\begin{algorithm}[H] 
\caption{RL for Lagrangian index with relaxed constraint}
\label{tab1}
\begin{algorithmic}[1]
    \State {\bf initialize} $\lambda$, Q(x,u) for all states and actions, $\epsilon$
    \For {n\;=\;1:\;$n_{end}$}
        \State Choose action $U_{n}^i$ for each arm $i$ in $\epsilon$-greedy fashion according to (\ref{eps-greedy})
        \State Update $X_{n+1}^i$ and reward $r_n^i$ from $X_{n}^i$ and $U_{n}^i$ for every arm $i$
        \State Update ($X_n^i, U_n^i$)-th Q-value for each arm $i$ as in (\ref{QupdateA1})
        \State Update common subsidy for passivity $\lambda$ as in (\ref{lambdaupdateA1})
        \State Optional: decrement $\epsilon$ as $\epsilon_{n+1}=max(0.01,\epsilon_{n}*0.99)$
    \EndFor
    \State Calculate the new estimate of the index for each arm $i$:
        \Statex\hspace*{5mm} $\gamma^i_{n+1}(X^i_n)=Q^i_n(X^i_n,1)-Q^i_n(X^i_n,0)$
\end{algorithmic}
\end{algorithm}

In some cases a simulator may not be available and experiments with a real system require us 
to respect the hard constraint (\ref{exact}). In such cases, we propose a modification 
of Algorithm~\ref{tab1} described as Algorithm~\ref{tab2} by using virtual actions. The virtual actions 
are used to estimate the Lagrange multiplier and the actual actions are generated according 
to the Lagrangian indices. This way the algorithm respects the constraint on the resources
even during the training phase.

\begin{algorithm}[H] 
\caption{RL for Lagrangian index with hard constraint}
\label{tab2}
\begin{algorithmic}[1]
    \State {\bf initialize} $\lambda$, Q(x,u) for all states and actions, $\epsilon$
    \For {n\;=\;1:\;$n_{end}$}
        \State Choose action $U_{n}^i$ according to $\epsilon$-Lagrangian policy, i.e.,
        \Statex\hspace*{5mm} w.p. $1-\epsilon$, activate top-$M$ arms according to Lagrangian index
        \Statex\hspace*{5mm} w.p. $\epsilon$, activate $M$ random arms
        \State Choose virtual action $\tilde{U}_{n}^i$ for each arm $i$ in $\epsilon$-greedy fashion as in (\ref{eps-greedy})
        \State Update $X_{n+1}^i$ and reward $r_n^i$ from $X_{n}^i$ and $U_{n}^i$ for every arm $i$
        \State Update ($X_n^i, U_n^i$)-th Q-value for each arm $i$ as in (\ref{QupdateA1})
        \State Update common subsidy for passivity $\lambda$ using virtual actions, i.e., 
        \Statex\hspace*{5mm} $\lambda_{n+1} = \lambda_n - \beta(n) \left(\sum_{i=1}^{N}\tilde{U}^i_n-M\right)$
        \State Optional: decrement $\epsilon$ as $\epsilon_{n+1}=max(0.01,\epsilon_{n}*0.99)$
    \EndFor
    \State Calculate the new estimate of the index for each arm $i$:
        \Statex\hspace*{5mm} $\gamma^i_{n+1}(X^i_n)=Q^i_n(X^i_n,1)-Q^i_n(X^i_n,0)$
\end{algorithmic}
\end{algorithm}

\begin{theorem}\label{thm:A2}
Under the conditions (\ref{timesteps}) for time steps, 
the iterates of Algorithm~2 converge a.s., i.e.,
$$
Q_n^i(x,u) \rightarrow Q_{\lambda_*}(x,u),
$$
and
$$
\lambda_n \rightarrow \lambda^*.
$$
as $n \rightarrow \infty$.
\end{theorem}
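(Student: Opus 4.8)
The plan is to mirror the two time scale stochastic approximation argument behind Theorem~\ref{thm:A1}, isolating the two points at which Algorithm~2 departs from Algorithm~1: (i) the real trajectory $\{X^i_n\}$ is now generated by the \emph{coupled} $\epsilon$-Lagrangian policy rather than by per-arm $\epsilon$-greedy actions, and (ii) the dual update \eqref{lambdaupdateA1} is driven by the virtual actions $\tilde U^i_n$ instead of the realized ones. Point (i) is immaterial for the primal iterates, because \eqref{QupdateA1} is the off-policy RVI Q-learning recursion of \cite{Abounadi}, whose convergence to the optimal $Q$-function of the $\lambda$-subsidized per-arm MDP \eqref{MDP} is insensitive to the behaviour policy as long as exploration is adequate; note that the next-state sample $X^i_{n+1}\sim p^i(\cdot\,|\,X^i_n,U^i_n)$ is still drawn from the true kernel given the action taken, so only the \emph{frequency} with which pairs $(x,u)$ are visited is affected. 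Point (ii) is the reason the virtual-action construction exists at all: under the hard constraint $\sum_i U^i_n\equiv M$ the realized actions carry no gradient information, whereas the virtual ones do.

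First I would analyze the faster time scale, freezing $\lambda_n\approx\lambda$ in the sense of Chapter~8 of \cite{BorkarBook}. For fixed $\lambda$, \eqref{QupdateA1} decouples into $N$ independent asynchronous RVI Q-learning recursions, one per arm, with the offset $f(\cdot)$ of \eqref{eff}. The exploration branch of the $\epsilon$-Lagrangian policy activates a uniformly random size-$M$ subset of arms with probability $\epsilon$, so at every step each arm is active with probability at least $\epsilon M/N$ and passive with probability at least $\epsilon(N-M)/N$; together with the standing assumption that every per-arm chain is irreducible under every stationary policy, this makes each pair $(x,u)\in\X\times\U$ positive recurrent along the sample path, which yields the ``every component updated infinitely often'' property and the balanced local-clock conditions required by the asynchronous stochastic approximation theory of \cite{BorkarBook}, and hence the convergence argument of \cite{Abounadi} goes through verbatim. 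Thus $Q^i_n - Q^i_{\lambda_n}\to 0$ a.s.

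Next I would analyze the slower time scale. Once $Q^i_n\approx Q^i_{\lambda_n}$, each virtual action $\tilde U^i_n$ equals $\arg\max_u Q^i_{\lambda_n}(X^i_n,u)$ up to the $\epsilon$-greedy perturbation in \eqref{eps-greedy}, i.e.\ it is an $\epsilon$-perturbation of the optimal action of the $\lambda_n$-subsidized arm-$i$ MDP. The dual recursion then tracks the ODE $\dot\lambda(t)=-\big(\sum_{i=1}^N E[\tilde U^i]-M\big)$, whose right-hand side is, exactly as in the sketch of Theorem~\ref{thm:A1}, a subgradient of $g(\lambda):=\max_\varphi L(\varphi,\lambda)$; since $g$ is a pointwise maximum of functions affine in $\lambda$ it is convex, and its minimizer is the $\lambda^*$ of \eqref{saddle}. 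Standard convergence of stochastic subgradient descent (after, if one prefers, projecting the iterates onto a fixed large interval known to contain $\lambda^*$, which makes stability transparent) then gives $\lambda_n\to\lambda^*$ a.s., and substituting back into the faster scale yields $Q^i_n\to Q^i_{\lambda^*}$. The residual $\epsilon$-greedy bias can be removed, if one insists on exact equality, by letting $\epsilon\downarrow 0$ as in the ``decrement $\epsilon$'' step.

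The step I expect to be the main obstacle is making precise \emph{under which stationary law} the subgradient estimate $\sum_i\tilde U^i_n-M$ is formed. As literally written, $\tilde U^i_n$ is evaluated at the real state $X^i_n$, whose stationary law is induced by the coupled hard-constrained policy, not by the per-arm $\epsilon$-greedy policy that would make $E[\mathbb{I}\{\tilde U^i_n=1\}]$ equal to arm $i$'s optimal active fraction; these two laws differ, so a priori the rest point of the slow ODE need not coincide with $\lambda^*$. The clean resolution I would adopt is to generate the virtual actions along an auxiliary run of the $N$ \emph{decoupled} per-arm chains under the $\epsilon$-greedy rule \eqref{eps-greedy} driven by the current $Q^i_n$: then the slow drift is manifestly the correct subgradient of $g$, and, being driven only by the slowly varying pair $(Q^i_n,\lambda_n)$, this auxiliary chain does not disturb the time-scale separation. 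Verifying this reconciliation, together with boundedness of the coupled iterates, is the only genuinely new work relative to Theorem~\ref{thm:A1}; everything else is a transcription of that proof and of \cite{Abounadi}.
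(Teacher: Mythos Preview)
Your approach is the same as the paper's: the paper's entire proof of Theorem~\ref{thm:A2} is the single sentence ``The proof is similar to that of Theorem~\ref{thm:A1},'' so any argument that mirrors the two time scale reasoning of Theorem~\ref{thm:A1} is, by construction, aligned with what the authors intend.

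That said, your write-up is considerably more careful than the paper, and in one place more careful than the paper perhaps has a right to be. Your observation that the virtual actions $\tilde U^i_n$ are evaluated at states $X^i_n$ whose stationary law is governed by the \emph{coupled} $\epsilon$-Lagrangian policy, not by the decoupled per-arm $\epsilon$-greedy policy, is a genuine issue: the slow-scale drift $E\big[\sum_i \tilde U^i_n\big]-M$ is an expectation under the ``wrong'' stationary distribution, and there is no a~priori reason it should vanish exactly at $\lambda^*$. The paper's one-line proof does not confront this at all. Your proposed fix --- running the virtual actions along an auxiliary set of decoupled chains driven by the current $Q^i_n$ --- is a clean way to restore the correct subgradient, and it is arguably what the algorithm \emph{should} do if one wants the stated theorem to hold as written. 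So your proposal is not merely a transcription of the paper's argument; it both follows the same route and patches a point the paper leaves open.
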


\begin{proof} The difference between Algorithms 1 and 2 is in the sampling scheme for $U_n, n \geq 0$. 
The only requirement for our convergence argument where sampling plays a key role is that
$$\liminf_{n\to\infty}\frac{\nu(x,u,n)}{n} > 0 \ \mbox{a.s.}$$
This is ensured in both cases by the irreducibility condition on the Markov chain and the $\epsilon$-randomization of controls, which are common to both schemes. The rest of the proof is the same as in  Theorem~\ref{thm:A1}.
\end{proof}

We would like to note that Algorithms~\ref{tab1}~and~\ref{tab2} are significantly simpler and use
less memory than the reinforcement learning algorithm in \cite{Avrachenkov2} for the Whittle
index. This is due to the fact that we do not need to use a reference state and many copies
of the Q-table.

\subsection{Deep Q-Learning for Lagrangian Indices}
\label{sec:dqn}

Till now we have used tabular Q-learning for learning Q-values and ultimately the Lagrangian indices. 
Now, let us use the Deep Q-Learning Network (DQN) \cite{Minh1,Mnih2} for approximation of the Q values. 
This will especially be useful in cases where the state space is large. A complete scheme of how our 
DQN algorithm works can be found in Algorithm~\ref{tab:DQN}. We note that in comparison with the DQN schemes for WIP 
\cite{Pagare,Robledo1,Robledo2}, the DQN for LIP has a much simpler architecture and as a result is numerically more stable 
and simpler to tune. This is because the DQN algortihms for WIP \cite{Pagare,Robledo1,Robledo2} require extra input for 
the reference state (recall that Whittle index is a function of the reference state in addition to the current state of the Markov chain.). 

In the case of homogeneous arms, one-dimensional input to NN suffices, which represents the state of the arm. 
In the case of heterogeneous arms, a two-dimensional input to NN is used. The first dimension is used 
to give information about the type of arm, and the second dimension specifies the state of the arm. 
The neural network outputs are the Q-values for both possible actions. The actions are taken in 
an $\epsilon$-greedy fashion. Namely, the probability for exploration $\epsilon$ is initialized 
as one and then, at the end of every iteration, is reduced by the exploration-decay factor. 
The min value of $\epsilon$ is set to be 0.01.

The equation for the Q target used in training DQN is:
$$
    Q_{target}(X_n,U_n)\gets((1-a_n)(r_0(X_n)+\lambda_n)+a_nr_1(X_n)
$$    
\begin{equation}
\label{Qtarget}    
    +\underset{v\in\{0,1\}}{max}Q_n(X_{n+1},v)-\frac{1}{2d}\sum\limits_{k\in S}(Q_n(k,0)+Q_n(k,1)).
\end{equation}
Note that in DQN we employ a second neural network, called the target model, for the computation of $\max_{v\in\{0,1\}}Q_n(X_{n+1},v)$. 
The reason for this decision is due to the maximisation bias of Q-learning: overestimating the Q-values results in the error increasing  
over time, due to the target being $r + \gamma \max_a Q(i,a)$. The use of a second neural network for the target helps control this bias. 
This second neural network copies periodically the parameter values of the main network.

We store the tuples $(X_{n}^i,U_{n}^i,r_{n}^i,X_{n+1}^i,\lambda_n)$ in an experience replay buffer, from which a batch of random samples 
is taken every time to train the main neural network. For each tuple $(X_{n}^i,U_{n}^i,r_{n}^i,X_{n+1}^i,\lambda_n)$ we calculate Q target 
values using equation (\ref{Qtarget}). Once we have calculated Q-values for all the examples in batch through $Q(\cdot)$ and 
its targets $Q_{target}(\cdot)$, we compute the loss function as the mean square error between the two and train the main neural network, 
using a standard Adam optimiser.

\begin{algorithm}[H]
\caption{DQN Algorithm for Lagrangian Index}
\label{tab:DQN}
\begin{algorithmic}[1]
    \State {\bf initialize} $\lambda^*$, DQN policy, DQN target, Batch Size, $\epsilon$=1

    \For {n\;=\;1:\;$n_{end}$}
        
        \State Update $\beta(n)$ as:
        $\beta(n)=\frac{1}{\lceil{n\log(n)/ 5000}\rceil+1}$
        \For{arm $i\in N$}
            \State Choose action $U_{n}^i$ for each arm i in an $\epsilon$-greedy fashion
            \State Update $X_{n+1}^i$ and reward $r_n^i$ from $X_{n}^i$ and $U_{n}^i$ for every arm i
            \State Store $(X_{n}^i,U_{n}^i,r_{n}^i,X_{n+1}^i,\lambda_n)$ in experience replay buffer
            \State Train DQN policy model on one batch of $(X_{n}^i,U_{n}^i,r_{n}^i,X_{n+1}^i,\lambda_n)$ tuples from experience replay buffer
        \EndFor
        \State Update the DQN target model
        \State Update common subsidy for passivity for all arms $\lambda$ as: \\ $\lambda_{n+1}=\lambda_n+\beta(n)\left(\sum\limits_{i=1}^{i=N}U_n^i-M\right)$
        
        \State Decrement $\epsilon$ as $\epsilon_{n+1}=max(0.01,\epsilon_{n}*0.99)$

    \EndFor
    \State Calculate the new index for every state s of each arm i by:
        \Statex\hspace*{5mm} $\gamma^i(x)=Q_{\theta}(x,1)-Q_{\theta}(x,0)$
\end{algorithmic}
\end{algorithm}

\subsection{Application to Restart Model}
\label{sec:restart}

Let us consider the following MDP, which can describe both the optimal 
web crawling \cite{Avrachenkov} and the minimization of the weighted age of information \cite{Hsu,Sombabu}. 

There are $N$ information sources, which can be viewed as arms of RMAB. The central device can 
probe $M$ information sources at a time. If the central device probes source $i$, it succeeds with
probability $p_i$ (the failure can be due to either the distant server timeout or due to the loss of 
the probing packet). The freshness of the information at source $i$ at time slot $n$ is modelled 
by $X^i_n$ with the following dynamics:
\begin{equation}
\label{eq:fresh_dynamics}
X^i_{n+1}=
\left\{\begin{array}{ll}
1, & \mbox{if \ } U^i_n=1 \mbox{ \ and the probe is successful,}\\
X^i_n + 1, &  \mbox{otherwise}.
\end{array}\right.
\end{equation}
For the relaxed problem formulation \textbf{($P_2$)}, we consider the following immediate
reward
\begin{equation}
\label{eq:fresh_reward}
r(x,u) = -w_i x + \lambda u,
\end{equation}
where $w_i$ gives the factor of importance of source $i$.
We shall again use the saddle point reformulation (\ref{saddle}) of the Lagrangian relaxation.
Since we have moved the constraint to the Lagrangian, the optimal policy can be taken to be deterministic. In fact, it is proven in \cite{Hsu} (see also a similar development in \cite{Avrachenkov})
that the optimal policy for the unconstrained MDP with the reward (\ref{eq:fresh_reward})
is of threshold type\footnote{The threshold property is usually proved by using additional structural properties such as monotonicity, convexity, submodularity, etc.\ of the value function. The required properties are satisfied in the present model.}. Therefore, we can search for an optimal policy in the form:
\begin{equation}
\label{thres_policy}
\varphi^i_*(1|x)=
\left\{\begin{array}{ll}
0, & \mbox{ \ if \ } x < \bar{x}_i,\\
1, & \mbox{ \ if \ } x \ge \bar{x}_i.
\end{array}\right.
\end{equation}
To analyse the model, we use the renewal argument dividing the time into cycles.
A new cycle starts when $X^i_n=1$. The expected duration of cycle is given by
\begin{equation}
\label{eq:Tcycle}
E_{\varphi^i_*}[T^i_{cycle}] = \bar{x}_i - 1 + \frac{1}{p_i}.
\end{equation}
Let us calculate the average gain $g^i$ corresponding to the policy $\varphi^i_*$.
By the renewal argument, we can write
$$
g^i = \frac{E_{\varphi^i_*}[R^i]}{E_{\varphi^i_*}[T^i_{cycle}]},
$$
where $E_{\varphi^i_*}[R^i]$ is the expected reward gained by arm $i$ on a cycle. (We suppress the dependence of $g$ on $\varphi^i_*$ for notational ease.) 
Next we calculate this by observing that a cycle can have two stages. Namely,
$$
E_{\varphi^i_*}[R^i] = E_{\varphi^i_*}[R^i_1] + E_{\varphi^i_*}[R^i_2],
$$
where
$$
E_{\varphi^i_*}[R^i_1] = -w_i \frac{(\bar{x}_i-1)\bar{x}_i}{2},
$$
corresponds to the first deterministic stage when $X^i_n < \bar{x}_i$,
and
$$
E_{\varphi^i_*}[R^i_2] = E_{\varphi^i_*}\left[-w_i\frac{T_2(2\bar{x}_i+T_2-1)}{2} + \lambda T_2\right]
$$
$$
= -w_i \left(\frac{\bar{x}_i}{p_i}+\frac{1}{2}\frac{2-p_i}{p_i^2} - \frac{1}{2p_i}\right) + \frac{\lambda}{p_i}
$$
$$
= -w_i \left( \frac{\bar{x}_i-1}{p_i}+\frac{1}{p_i^2} \right) + \frac{\lambda}{p_i},
$$
where we used the fact that the duration of the second stage $T_2$ is geometrically
distributed. Thus,
$$
g^i =
\frac{-w_i \frac{(\bar{x}_i-1)\bar{x}_i}{2} - w_i \left( \frac{\bar{x}_i-1}{p_i}+\frac{1}{p_i^2} \right) 
+ \frac{\lambda}{p_i}}{\bar{x}_i - 1 + \frac{1}{p_i}}.
$$
To optimize the above expression, we first find the minimizer of the continuous relaxation
by equating to zero the derivative of $g^i$. This minimizer is given as the positive root 
of the following quadratic equation:
$$
p_i w_i \bar{x}_i^2 + 2 w_i (1-p_i) \bar{x}_i + (2\lambda - w_i(1-p_i)) = 0,
$$
which can be explicitly written as
$$
\tilde{x}_i = \frac{\sqrt{(1-p_i)-2\lambda p_i/w_i}-(1-p_i)}{p_i}.
$$
Since $\bar{x}_i$ is an integer, the maximal value of $g_i$ is given by
\begin{equation}
\label{eq:gainopt}
g^i_{opt} = \max \{ g^i(\lfloor \tilde{x}_i \rfloor), g^i(\lceil\tilde{x}_i\rceil) \},
\end{equation}
where $\lfloor a \rfloor$ is the integer part of $a$ and $\lceil a \rceil = \lfloor a \rfloor + 1$. We shall denote by $\bar{x}^i_{opt}$ either $\lfloor \tilde{x}_i \rfloor$ or $\lceil\tilde{x}_i\rceil$ depending on whether the maximum above is attained at the former or at the latter. 

Now, by (\ref{saddle}), finding $\lambda$ is an easy one-dimensional convex optimization problem:
\begin{equation}
\label{eq:lambdaopt}
\lambda_* = \arg \min_\lambda L(\phi_{opt},\lambda) = \arg \min_\lambda \sum_{i=1}^{N} g^i_{opt}(\lambda) - \lambda \alpha N.
\end{equation}
This problem can be efficiently solved for instance by the bisection method.

An interesting alternative to the solution of the above optimization problem
is to estimate the Lagrange multiplier online as in (\ref{lambdaupdateA1}).
The advantage of this approach is that we effectively use the knowledge about
the structure of the model and do not need to perform more demanding Q-updates (\ref{QupdateA1}).

Next, we need to satisfy the constraint (\ref{constraint}). Again by the the renewal argument,
we have 
$$
\lim_{n\to\infty} \frac{1}{n} E_{\varphi^i_*} \left[ \sum_{m=0}^{n-1} I\{U^i_m=1\} \right]
= \frac{1}{E_{\varphi^i_*}[T^i_{cycle}]} = \frac{1}{\bar{x}_i - 1 + 1/p_i}.
$$
In general, we shall have
$$
\sum_{i=1}^{N} \frac{1}{\bar{x}_{i,opt}(\lambda_*) - 1 + 1/p_i} \neq M.
$$
However, from the theory of the constrained MDP (see e.g., \cite{Altman, BorkarConstrained, BorkarConstraint, Piunov}), we
know that in this case there will be at least on arm $i_*$ such that 
$$
g^{i_*}(\lfloor \tilde{x}_{i_*} \rfloor) = g^{i_*}(\lceil\tilde{x}_{i_*}\rceil).
$$
This allows us to do a randomization of the policy at that arm and to satisfy
the constraint (\ref{constraint}) exactly. We would like to note that in the
case of a large number of arms, the discrepancy is very small and it makes a 
very small practical difference if we perform such randomization or just accept
a minor violation of the constraint. Furthermore, as we see shortly, the expression
of the Lagrangian index depends only on the optimal gain and the optimal Lagrange
multiplier and does not require the explicit expression of the optimal policy. 

Specifying the expression (\ref{Lindex}), we can write
$$
\gamma^i(x) = Q^i(x,1) - Q^i(x,0)
$$
$$
=\left(-w_i x + \lambda_* - g^i_{opt} + p_i V^i(1) + (1-p_i) V^i(x+1)\right) - \left(-w_i x - g^i_{opt} + V^i(x+1)\right)
$$
$$
=\lambda_* + p_i\left(V^i(1)-V^i(x+1)\right).
$$
Next we obtain $V$-values from the Bellman equation
$$
V^i(x) = \max_u \left\{ -w_i x + \lambda_* u - g^i_{opt} + \sum_y p^i(y|x,u) V^i(y) \right\}.
$$
Since $V^i(\cdot)$ is the relative value, we can set $V^i(1)=0$ which simplifies calculations. 
We use the Bellman equation for two cases separately:
\begin{enumerate}
\item Case $x < \bar{x}_{i,opt}$: $V^i(x) = -w_ix -g^i_{opt} + V^i(x+1)$ or $V^i(x+1)=V^i(x)+w_i x +g^i_{opt}$.
Consequently, we have
$$
V^i(x) = V^i(1) + g^i_{opt} (x-1) + w_i \frac{(x-1)x}{2} = g^i_{opt} (x-1) + w_i \frac{(x-1)x}{2}.
$$
\item Case $x \ge \bar{x}_{i,opt}$: $V^i(x) = -w_ix +\lambda -g^i_{opt} +p_i V^i(1)+(1-p_i)V^i(x+1)$, and hence
$V^i(x+1) = (V^i(x) + w_ix -\lambda +g^i_{opt})/(1-p_i)$. This is a linear difference equation,
whose general solution is given by
$$
V^i(x) = \frac{C_i}{(1-p_i)^x} -\frac{w_i}{p_i}x +\frac{p_i(\lambda_*-g^i_{opt})-w_i(1-p_i)}{p_i^2},
$$
where the constant $C_i$ is found from the condition 
$$
V(\bar{x}_{i,opt}) = g^i_{opt} (\bar{x}_{i,opt}-1) + w_i \frac{(\bar{x}_{i,opt}-1)\bar{x}_{i,opt}}{2}.
$$
\end{enumerate}
Thus, we can summarize the above derivations in the following theorem and Algorithm~\ref{tab:restart}
for online computation of the Lagrangian indices in the restart model. Note that in this version of the
algorithm the probabilities $p_i$, $i=1,...,N$, are assumed to be known. However, a simple and efficient online
estimator can be added if they need to be estimated as well. In fact, the problem of efficient estimation 
of change rates of information sources has been recently addressed in \cite{Gugan1,Gugan2}. 
We would like to emphasize that in the design of reinforcement learning algorithms it is
always preferable to take into account the known structure of the model.
\begin{theorem}
\label{thm:LIrestart}
The Lagrangian index for the restart model has the following expression:
$$
\gamma^i(x) = \lambda_* - p_i V^i(x+1),
$$
where
$$
V^i(x) = g^i_{opt} (x-1) + w_i \frac{(x-1)x}{2}, \quad x < \bar{x}_{i,opt},
$$
$$
V^i(x) = \frac{C_i}{(1-p_i)^x} -\frac{w_i}{p_i}x +\frac{p_i(\lambda_*-g^i_{opt})-w_i(1-p_i)}{p_i^2},
\quad x \ge \bar{x}_{i,opt},
$$
and where the Lagrange multiplier $\lambda_*$ is found from the 
one-dimensional convex optimization (\ref{eq:lambdaopt}) and 
the optimal gain $g^i_{opt}$ is given by (\ref{eq:gainopt}).
\end{theorem}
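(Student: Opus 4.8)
The plan is to exploit the threshold structure of the optimal policy for the Lagrangian-relaxed single-arm MDP \eqref{MDP} and then solve the associated average-reward Bellman equation explicitly. First I would invoke the result of \cite{Hsu} (cf.\ \cite{Avrachenkov}) that, for the running reward \eqref{eq:fresh_reward}, an optimal stationary policy is of threshold form \eqref{thres_policy}; under such a policy the $i$-th arm's state process is a renewal process whose cycles begin each time $X^i_n = 1$. A renewal--reward computation then yields the expected cycle length \eqref{eq:Tcycle} and, by splitting each cycle into the deterministic ``pre-threshold'' phase ($X^i_n < \bar x_i$) and the geometrically distributed ``post-threshold'' phase, the expected per-cycle reward; their ratio gives the average gain $g^i(\bar x_i)$. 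Optimizing the continuous relaxation of $g^i(\cdot)$ reduces, on setting the derivative to zero, to a quadratic whose positive root is $\tilde x_i$; since the true threshold is integer-valued, $g^i_{opt}$ is the larger of the two values at $\lfloor \tilde x_i\rfloor$ and $\lceil \tilde x_i\rceil$, which is \eqref{eq:gainopt}, and $\bar x_{i,opt}$ is the corresponding argmax.

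With $g^i_{opt}$ and $\bar x_{i,opt}$ in hand, I would pin down $\lambda_*$ as the minimizer of the Lagrangian \eqref{eq:lambdaopt}: since $L(\varphi,\lambda)$ is affine in $\lambda$, its pointwise maximum over $\varphi$ is convex in $\lambda$, so this is a one-dimensional convex program solvable by bisection, and it is here that the constraint \eqref{constraint} is honoured (up to the usual single-arm randomization guaranteed by constrained-MDP theory). Then, writing out $\gamma^i(x) = Q^i(x,1) - Q^i(x,0)$ and cancelling the common terms $-w_i x$, $-g^i_{opt}$ and the relative value at the successor state, one is left with $\gamma^i(x) = \lambda_* + p_i\bigl(V^i(1) - V^i(x+1)\bigr)$, which after the normalization $V^i(1)=0$ becomes the claimed $\gamma^i(x) = \lambda_* - p_i V^i(x+1)$.

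It remains to solve the Bellman equation $V^i(x) = \max_u\{-w_i x + \lambda_* u - g^i_{opt} + \sum_y p^i(y|x,u)V^i(y)\}$. For $x < \bar x_{i,opt}$ the passive action attains the maximum and the recursion $V^i(x+1) = V^i(x) + w_i x + g^i_{opt}$ telescopes from $V^i(1)=0$ to the stated quadratic expression. For $x \ge \bar x_{i,opt}$ the active action is optimal and, substituting $p^i(1|x,1)=p_i$ and $p^i(x+1|x,1)=1-p_i$, one obtains the first-order linear difference equation $V^i(x+1) = \bigl(V^i(x) + w_i x - \lambda_* + g^i_{opt}\bigr)/(1-p_i)$; its general solution is the homogeneous term $C_i/(1-p_i)^x$ plus a particular solution affine in $x$ (found via the ansatz $V^i(x)=ax+b$ and matching coefficients), giving the second displayed formula. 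The constant $C_i$ is then fixed by requiring continuity at the threshold, i.e.\ by equating the $x\ge\bar x_{i,opt}$ formula at $x=\bar x_{i,opt}$ to the value $g^i_{opt}(\bar x_{i,opt}-1) + w_i(\bar x_{i,opt}-1)\bar x_{i,opt}/2$ coming from the first branch.

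The main obstacle I anticipate is bookkeeping rather than conceptual: computing the particular solution of the post-threshold difference equation and the matching constant $C_i$ correctly, and verifying that the ``$\max$'' in the Bellman equation is indeed resolved by the threshold action on each branch (this is where the monotonicity/convexity properties of the value function mentioned in the footnote are used, ensuring that a single threshold $\bar x_{i,opt}$ separates the passive and active regions). Everything else --- the renewal--reward identity, the quadratic optimization for $\tilde x_i$, and the convexity of the dual --- is standard and requires only routine calculation.
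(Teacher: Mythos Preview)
Your proposal is correct and follows essentially the same route as the paper's own derivation: invoke the threshold form of the optimal policy via \cite{Hsu,Avrachenkov}, compute $g^i_{opt}$ by a renewal--reward argument with the two-phase cycle decomposition and the quadratic optimization over the continuous threshold, determine $\lambda_*$ from the one-dimensional convex dual, reduce $\gamma^i(x)$ to $\lambda_* - p_i V^i(x+1)$ via the normalization $V^i(1)=0$, and then solve the Bellman recursion separately on the passive and active branches, matching at the threshold to fix $C_i$. Even the anticipated ``bookkeeping'' obstacles you flag are exactly where the paper spends its effort.
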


\begin{algorithm}[H]
\caption{Algorithm for Restart Problem}
\label{tab:restart}
\begin{algorithmic}[1]
    \State {\bf initialize} $\lambda=0$, $\epsilon=0.01$

    \For {n\;=\;1:\;$n_{end}$}
        \For{arm $i\in N$}
        \State Calculate $
\tilde{x}_i = \frac{\sqrt{(1-p_i)-2\lambda p_i/w_i}-(1-p_i)}{p_i}.
$
        \State Get $g^i_{opt} = \max \{ g^i(\lfloor \tilde{x}_i \rfloor), g^i(\lceil\tilde{x}_i\rceil) \}$ using
        \Statex $
\;\;\;\;\;\;\;\;\;\;\;g^i =
\frac{-w_i \frac{(x_i-1)x_i}{2} - w_i \left( \frac{x_i-1}{p_i}+\frac{1}{p_i^2} \right) 
+ \frac{\lambda}{p_i}}{x_i - 1 + \frac{1}{p_i}},
$ substituting $x_i=\lfloor \tilde{x}_i \rfloor$ and 
\Statex \;\;\;\;\;\;\;\;\;\;\; $x_i=\lceil\tilde{x}_i\rceil $
\State $\bar{x}^i_{opt}$ is either $\lfloor \tilde{x}_i \rfloor$ or $\lceil\tilde{x}_i\rceil$ depending on whether the maximum
\Statex \;\;\;\;\;\;\;\;\;\;\;above is attained at the former or at the latter
\State $Q^i(x,1)=-w_i x + \lambda - g^i_{opt} +  (1-p_i) V^i(x+1)$
\Statex$\;\;\;\;\;\;\;\;\;\;\;Q^i(x,0)=-w_i x - g^i_{opt} + V^i(x+1)$ where 
\Statex$\;\;\;\;\;\;\;\;\;\;\;
V^i(x) = g^i_{opt} (x-1) + w_i \frac{(x-1)x}{2}, \quad x < \bar{x}_{i,opt},
$
\Statex$\;\;\;\;\;\;\;\;\;\;\;
V^i(x) = \frac{C_i}{(1-p_i)^x} -\frac{w_i}{p_i}x +\frac{p_i(\lambda-g^i_{opt})-w_i(1-p_i)}{p_i^2},
\quad x \ge \bar{x}_{i,opt},
$
    \State We get $C_i$ from $
 \frac{C_i}{(1-p_i)^{\bar{x}_{i,opt}}} -\frac{w_i}{p_i}{\bar{x}_{i,opt}}+\frac{p_i(\lambda-g^i_{opt})-w_i(1-p_i)}{p_i^2} = $ 
 \Statex \;\;\;\;\;\;\;\;\;\;\;$V(\bar{x}_{i,opt}) =g^i_{opt} (\bar{x}_{i,opt}-1) + w_i \frac{(\bar{x}_{i,opt}-1)\bar{x}_{i,opt}}{2}.
$

            \State Choose action $a_{n}^i$ for each arm $i$ in an $\epsilon$-greedy fashion
            based on
            \Statex \;\;\;\;\;\;\;\;\;\;\;$Q^i(x,1)$ and $Q^i(x,0)$
        \EndFor
       
        \State Update $\lambda$ as: $\lambda_{n+1}=\lambda_n-\beta(n)\left(\sum\limits_{i=1}^{i=N}U_n^i-M\right)$

    \EndFor
   \State $
\gamma^i(x) = \lambda - p_i\left(V^i(x+1)\right)
$
\end{algorithmic}
\end{algorithm}

\section{Numerical Examples}
\label{sec:num}

In this section we provide numerical experiments for the restart problem (where the Lagrangian index is explicitly known), a problem from literature that is provably not Whittle indexable, and the deadline scheduling problem. This help us illustrate different facets of the Lagrangian index.

\subsection{Restart Model}

In this first numerical experiment, we consider the restart model as described in Section~\ref{sec:restart} and
apply Algorithm~\ref{tab:restart}. There are four different categories of arms: 
\begin{itemize}
    \item Type 1: $p_1=0.95, w_1=0.9$ (reliable, important)
    \item Type 2: $p_2=0.95, w_2=0.2$ (reliable, not important)
    \item Type 3: $p_3=0.7, w_3=0.95$ (not very reliable, important)
    \item Type 4: $p_4=0.7, w_4=0.2$  (not very reliable, not important)  
\end{itemize}
A total of 100 arms are taken, with 25 belonging to each category and the budget is set as 16 arms. 
Therefore, in this example $N=100$ and $M=16$. Note that for all the 100 arms we still learn just one common Lagrange multiplier.

Figure~\ref{fig:RestartSubsidy}(a) shows the convergence of the subsidy (Lagrange multiplier) to the value -11.6. In the right Figure~\ref{fig:RestartSubsidy}(b) we plot the value of the optimal Lagrange function (optimized with respect to the policy) at different values of the Lagrange multiplier and as expected we see that the Lagrange function is piecewise linear and takes the minimum value around -11.6 (shown by the red line), same as the value of the Lagrange multiplier we obtain by our Algorithm~4.

\begin{figure}[H]
  \centering

  \begin{subfigure}[t]{0.45\textwidth}
    \centering
    \includegraphics[width=\textwidth]{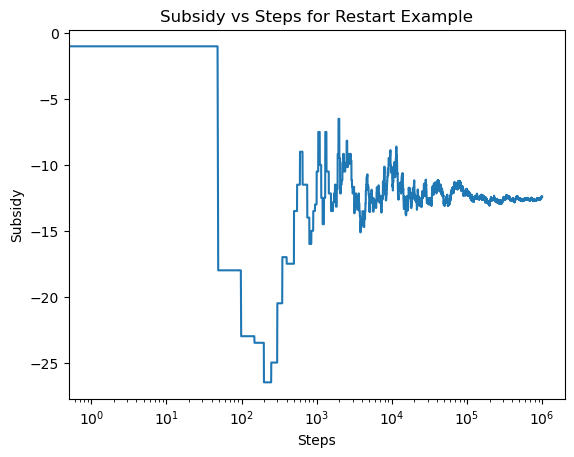}
    \caption{Subsidy (Lagrange multiplier) estimation.}
    \label{fig:RestartSubsidy1}
  \end{subfigure}
  \begin{subfigure}[t]{0.45\textwidth}
    \centering
    \includegraphics[width=\textwidth]{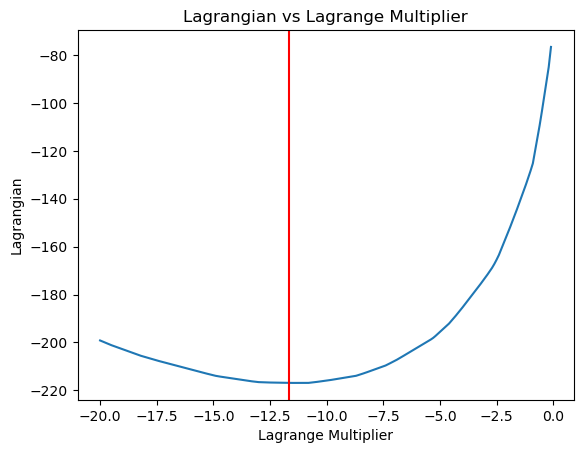}
    \caption{Optimal Lagrange multiplier $\lambda_*$.}
    \label{fig:RestartSubsidy2}
  \end{subfigure}
  \caption{Restart Model. Subsidy (Lagrange multiplier).}
  \label{fig:RestartSubsidy}
\end{figure}

In Figure~\ref{fig:RestartReward} we compare the performances of LIP with WIP. We used
the Whittle indices calculated in \cite{Sombabu}. As we see in the restart example LIP 
performs as good as WIP.

\begin{figure}[H]
    \centering
    \includegraphics[scale=0.5]{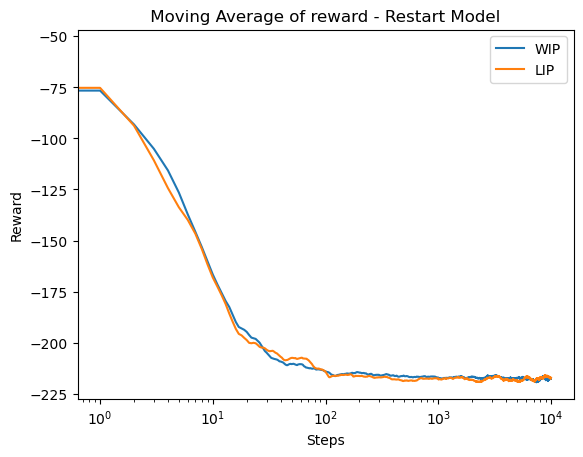}
    \caption{Restart Model. LIP and WIP average reward comparison.}
    \label{fig:RestartReward}   
\end{figure}

\subsection{A Non Whittle-Indexable Problem}

In this example, we test the performance of the Lagrangian Index in a non-Whittle Indexable Problem as described in Appendix A.4 of \cite{Gast0}.

The transition probability matrices for this problem are given by
\begin{align*}
P_0=
\begin{bmatrix}
0.005 & 0.793 & 0.202\\
0.027 & 0.558 & 0.415 \\
0.736 & 0.249 & 0.015
\end{bmatrix}
\quad \text{and} \quad
P_1=
\begin{bmatrix}
0.718 & 0.254 & 0.028\\
0.347 & 0.097 & 0.556 \\
0.015 & 0.956 & 0.029
\end{bmatrix}
\end{align*}
and the rewards are given by
$$
r(1,0)=0, \quad r(2,0)=0, \quad r(3,0)=0,
$$
$$
r(1,1)=0.699, \quad r(2,1)=0.362, \quad r(3,1)=0.715.
$$
In many practical cases even when the problem is not Whittle indexable, Whittle indices 
are used empirically. Hence, we compute online the Whittle indices for this problem
by the algorithm from \cite{Avrachenkov2} and compare the performance of Lagrangian indices 
against them. We test both Algorithm~\ref{tab1} and Algorithm~\ref{tab2} on this example,
see Figure~\ref{fig:NonWhittle1} and Figure~\ref{fig:NonWhittle2}, respectively. 
As seen from the plots, the Lagrangian indices result 
in a much higher reward as compared to the reward obtained by the algorithm estimating Whittle indices. Also, we note that even though Algorithm~\ref{tab2} respects the hard constraint
and uses virtual actions to estimate subsidy, this does not affect the rate of convergence
of the moving average reward.

\begin{figure}[H]
  \centering

  \begin{subfigure}[t]{0.45\textwidth}
    \centering
    \includegraphics[width=\textwidth]{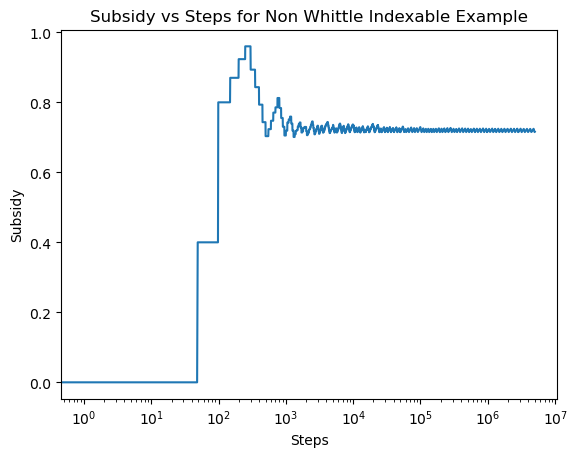}
    \caption{Subsidy (Lagrange multiplier) estimation.}
    \label{fig:RestartSubsidy1}
  \end{subfigure}
  \begin{subfigure}[t]{0.45\textwidth}
    \centering
    \includegraphics[width=\textwidth]{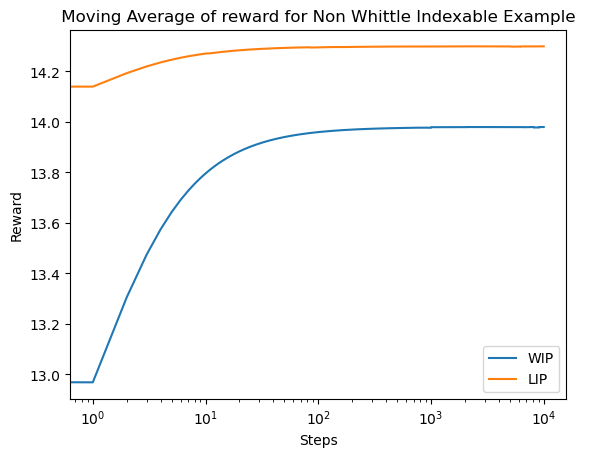}
    \caption{Moving average reward.}
    \label{fig:RestartSubsidy2}
  \end{subfigure}
  \caption{Non Whittle Indexable problem. Algorithm~\ref{tab1}.}
  \label{fig:NonWhittle1}
\end{figure}

\begin{figure}[H]
  \centering

  \begin{subfigure}[t]{0.45\textwidth}
    \centering
    \includegraphics[width=\textwidth]{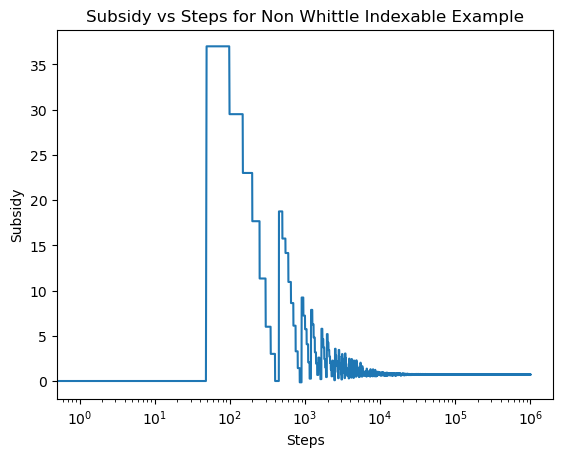}
    \caption{Subsidy (Lagrange multiplier) estimation.}
    \label{fig:RestartSubsidy1}
  \end{subfigure}
  \begin{subfigure}[t]{0.45\textwidth}
    \centering
    \includegraphics[width=\textwidth]{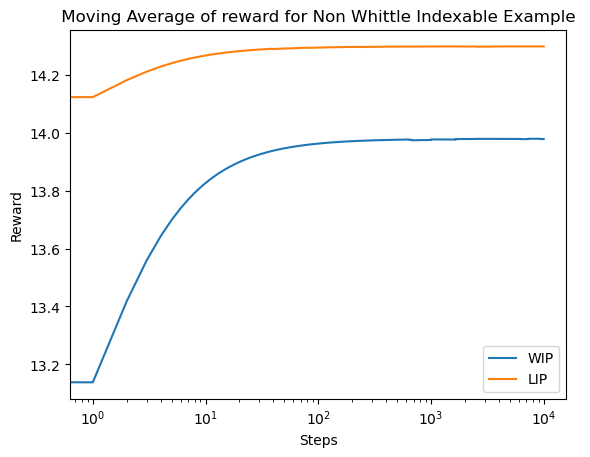}
    \caption{Moving average reward.}
    \label{fig:RestartSubsidy2}
  \end{subfigure}
  \caption{Non Whittle Indexable problem. Algorithm~\ref{tab2}.}
  \label{fig:NonWhittle2}
\end{figure}

\subsection{Deadline Scheduling Problem}

Let us now consider the deadline scheduling problem studied in \cite{Yu}. The states of this
problem are formed by two different variables: the service time $B \in [0, 9]$ and
the deadline $T \in [0, 12]$, leading to a total of $|S| = 130$ states (although several 
of these states are not accessible by the underlying Markov chain). The variable $B$ 
represents the amount of workload pending to complete a certain job while
the variable $T$ represents the remaining time available to perform it. When there is 
no job at the $i^{th}$ position, the state is $(0, 0)$, while otherwise it is 
$(T_n^i, B_n^i)$. In each iteration, the state transition from $X_n^i=(T_n^i, B_n^i)$ 
depends on the action $U_n^i$ in the following way:
\begin{equation}
s_{n+1}^i =
\begin{cases} 
(T_n^i - 1, (B_n^i - U_n^i)^+), & \text{if } T_n^i > 1, \\
(T, B) \text{ with prob. } q(T, B), & \text{if } T_n^i \leq 1.
\end{cases}
\end{equation}
where $b^+ = \max(b, 0)$. When $T$ = 1, the deadline for completing the job ends, 
and the new state (including the empty state $(0, 0)$) is chosen uniformly at random. 
The value of $B$, the workload to be completed, is only reduced if the action
is active. If the scheduler reaches the state ($T = 1$, $B > 0$), the job cannot 
be finished on time, and an extra penalty
$F(B_n^i-a_n^i)=0.2(B_n^i-a_n^i)^2$ is incurred. In addition, activating the arms 
involves a fixed cost $c = 0.8$. The rewards are given by
\begin{equation}
r_{n}^i(X_{n}^i, U_{n}^i, c) =
\begin{cases}
(1 - c)U_{n}^i, & \text{if } B_n^i > 0 \text{ and } T_n^i > 1, \\
(1 - c)U_{n}^i - F(B_n^i - a_n^i), & \text{if } B_n^i > 0 \text{ and } T_n^i = 1, \\
0, & \text{otherwise.}
\end{cases}
\end{equation}
The authors of \cite{Yu} provide an explicit expression for Whittle index, given by:
\begin{equation}
\lambda(T, B, c) =
\begin{cases}
0, & \text{if } B = 0, \\
1 - c, & \text{if } 1 \leq B \leq T - 1, \\
F(B - T - 1) - F(B - T) + 1 - c, & \text{if } T \leq B.
\end{cases}
\end{equation}

As the state space is large we use DQN to learn the Lagrangian indices. A neural network with 3 hidden layers having (512, 256, 128) neurons connected through the ReLU activation is used. A batch size of 32 is used and the size of the experience replay buffer is 1000. The exploration decay factor is 0.9995 and the learning rate for the Adam optimiser is $10^{-5}$.  

We first consider the homogeneous case using $N$ = 5 and $M$ = 2, with a processing cost $c = 0.8$.
The results are presented in Figure~\ref{fig:DQNhomo}.

\begin{figure}[H]
  \centering

  \begin{subfigure}[t]{0.45\textwidth}
    \centering
    \includegraphics[width=\textwidth]{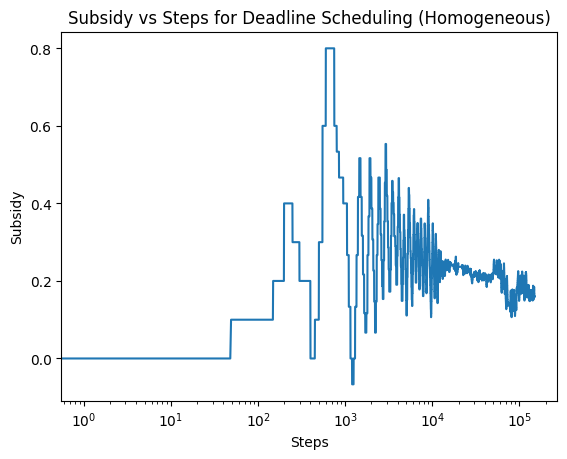}
    \caption{Subsidy (Lagrange multiplier) estimation.}
    \label{fig:RestartSubsidy1}
  \end{subfigure}
  \begin{subfigure}[t]{0.45\textwidth}
    \centering
    \includegraphics[width=\textwidth]{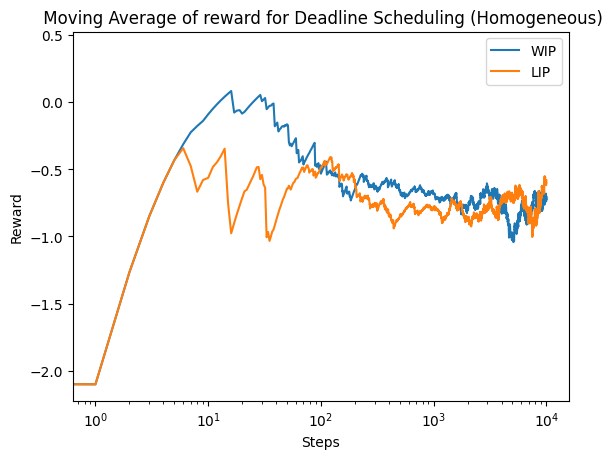}
    \caption{Moving average reward.}
    \label{fig:RestartSubsidy2}
  \end{subfigure}
  \caption{Deadline scheduling problem (homogeneous arms). Algorithm~\ref{tab:DQN}.}
  \label{fig:DQNhomo}
\end{figure}

Then we consider the scenario of heterogeneous arms, where each arm has a state space of size $|S|= 130$ and $N = 20$. 
The results are presented in Figure~\ref{fig:DQNheter}.
To differentiate the arms, we assign different activation cost values $c = \{0.1, 0.3, 0.6, 0.8\}$ for every 5 arms. 
As was suggested in Section~\ref{sec:dqn}, we take into account heterogeneity by increasing the
dimension of the input to the neural network. The first dimension is used to give information about the type 
of arm and the other dimensions represent the state of the arm. Specifically, the values of the first input are $0,1,2,3$ for $c=0.1$, $c=0.3$, $c=0.6$ and $c=0.8$.

\begin{figure}[H]
  \centering

  \begin{subfigure}[t]{0.45\textwidth}
    \centering
    \includegraphics[width=\textwidth]{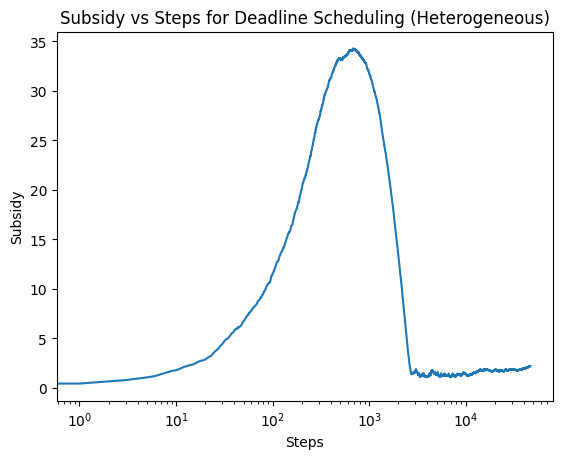}
    \caption{Subsidy (Lagrange multiplier) estimation.}
    \label{fig:RestartSubsidy1}
  \end{subfigure}
  \begin{subfigure}[t]{0.45\textwidth}
    \centering
    \includegraphics[width=\textwidth]{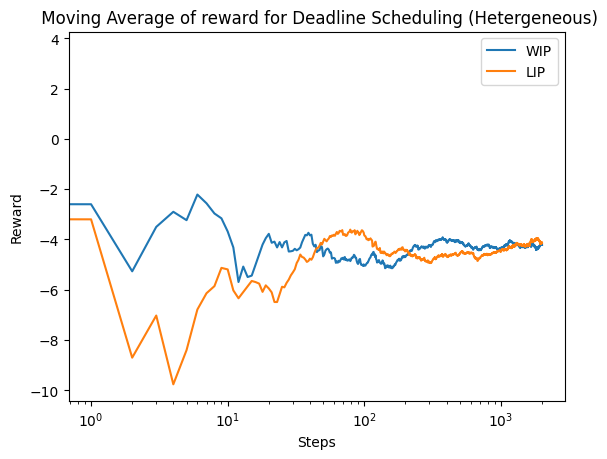}
    \caption{Moving average reward.}
    \label{fig:RestartSubsidy2}
  \end{subfigure}
  \caption{Deadline scheduling problem (heterogeneous arms). Algorithm~\ref{tab:DQN}.}
  \label{fig:DQNheter}
\end{figure}

Here, as was the case in the restart model, the performance of LIP is comparable to the performance of WIP.
However, since we do not need to use a reference state in our algorithms for LIP as opposed to the algorithms
from \cite{Avrachenkov2,Pagare,Robledo1,Robledo2}, the implementation of the reinforcement learning algorithms
for LIP is much more straightforward than that for WIP and at the end uses less number of computer operations.
This leads to the gain in time and memory.

\section{Asymptotic Optimality}
\label{sec:asympt}

\subsection{Outline} In this section we provide an alternative proof of asymptotic optimality of the Lagrangian index
for the case when the number of homogeneous arms goes to infinity and the global attractor hypothesis \cite{Gast,Verloop} holds. First, let us briefly overview the proof technique. 
\begin{enumerate}
\item We start by two key ingredients of our argument:
\begin{enumerate}
\item A well known result regarding control of constrained Markov decision processes with a single constraint, which allows us to assume that the optimal policy can be taken to  use randomization in at most one state between at most two controls \cite{Ross}. 

\item The notion of exchangeability of an infinite collection of random variables and its consequences \cite{Kingman}, \cite{Olshen}. 
\end{enumerate}

\item The (a), (b) above are together applied  to the infinite collection of arms, each a Markov decision process viewed as an $\ZA := (\X\times\U)^\infty$-valued random variable, thus rendering the entire collection a $\ZA^\infty$-valued random variable. Specifically, an index is assigned to infinite sequences of states at each time belonging to a set of probability $1$ using what we call the `Procedure 2', in a manner that renders the infinite array  exchangeable. 

\item The  above procedure leads to infinitely many Markov decision processes with their controlled transition probabilities coupled through their additional dependence on a random probability measure on $\X$.

\item This fact allows us to denote the resulting parametrized transition matrix as $P(\nu)$ for a random probability measure $\nu \in \PA(\X)$. Considering the map $\Psi: \mu \in \PA(\X) \mapsto \mu P(\mu) \in \PA(\X)$ where $\mu \in \PA(\X)$ is treated as a row vector, we make the following important assumption, a variant of the \textit{Global Attractor Hypothesis} introduced in \cite{Gast}, \cite{Verloop}.

\item The foregoing paves the way for the rest of the analysis that follows.
\end{enumerate}

The  Global Attractor Hypothesis alluded to above is as follows and will be assumed to hold throughout.\\

\noindent \dg \ \textbf{Global Attractor Hypothesis:} 
 The map $\Psi$ has a unique fixed point $\nu^*$ and the iteration $\mu_{n+1} = \mu_nP(\mu_n), n \geq 0,$ converges to $\nu^*$ for any choice of $\nu_0 \in \PA(\X)$.\\
 
In our work, $\nu, \mu$ in bullet 4 above will be replaced by the empirical state frequency of the infinite family of controlled Markov chains. 

\subsection{Some Background Material}

Here we recall some results about constrained Markov decision processes and exchangeable random variables.

\begin{enumerate}

\item \textit{Constrained Markov decision processes:} Consider a single Markov decision process with a finite state space $\X$ and a finite (more generally, a compact) action space $\U$ with controlled transition probabilities $p(j|i,u)$ for $i,j \in \X$ and $u \in \U$. Assume that it is irreducible under every randomized Markov policy $\varphi(\cdot|i) \in \PA(\U), i \in \X$ with the corresponding transition probabilities $p_\varphi(j|i) := \sum_u\varphi(u|i)p(j|i,u)$ for $i,j \in \X$ and the unique stationary distribution $\pi_\varphi \in \PA(\X)$. Then a constrained Markov decision process for average cost criterion with a single constraint has the form:\\

\noindent Maximize over $\varphi$ the cost $\sum_{i,u}\pi_\varphi(i)p_\varphi(u|i)r_1(i,u)$ subject to the constraint 
$$\sum_{i,u}\pi_\varphi(i)p_\varphi(u|i)r_2(i,u) \leq K$$ 
for prescribed primary and secondary reward functions $r_1, r_2 : S\times\U \to \mathbb{R}$.\\

A remarkable result of \cite{Ross} shows that this problem has an optimal control that requires randomization at at most one state between at most two different control choices. See \cite{BorkarConstrained} and \cite{BorkarConstraint} for a convex analytic proof using a result of Dubins \cite{Dubins} about extreme points of intersections of closed convex sets with half spaces, which also generalizes the above result to multiple constraints.

\item \textit{Exchangeable random variables:} An infinite sequence of Polish space valued random variables is said to be exchangeable if its joint law is invariant under finite permutations. This has many interesting consequences. Define the \textit{exchangeable} $\sigma$-field as the collection of sets in the product Borel $\sigma$-field that are invariant under finite permutations. (It is easy to check from the definition that this is indeed a $\sigma$-field.) Then we have:\\

\begin{theorem}  (\textit{De Finetti's theorem}) An exchangeable sequence of random variables is i.i.d.\ when conditioned on the exchangeable $\sigma$-field. \end{theorem}

This has many important consequences. The one that concerns us most is the following `strong law of large numbers for exchangeable random variables'.

\begin{corollary}\label{SLLNX} Let $Y_n, n \geq 1,$ be integrable exchangeable random variables in $\mathbb{R}^d, d \geq 1$.  Then there exists an $\mathbb{R}^d$-valued integrable random variable $\widetilde{Y}$ measurable with respect to the associated exchangeable $\sigma$-field such that
$$\frac{1}{n}\sum_{m=1}^nY_m \to \widetilde{Y}, \ \mbox{a.s.}$$
\end{corollary}
\begin{proof} This follows by combining the preceding theorem with the classical strong law of large numbers.
\end{proof}

For a Polish space $S$, consider an exchangeable family $\{X_n\}$ of $S$-valued random variables. Then for   a  measurable function $f: S \to \R^d$, $\{Y_n = f(X_n)\}$ is also exchangeable and the above can be applied to $\{Y_n\}$ if $E[\|f(Y_n)\|] < \infty$.
\end{enumerate}


Since in our problem the arms are identical, by symmetry one could divide both the displays in the statement of ($P_1$) by $N$ 
and reduce the problem to a single constrained problem of maximizing the stationary expectation of $r(X_n,U_n)$ subject to the stationary 
expectation of  $I\{U_n=1\}$ being equal to $\frac{M}{N}$. This symmetry implies in particular that $\lambda_*$ 
is also the Lagrange multiplier for the single arm constrained MDP. This MDP, however, is a complete artifice, 
but it will play a role in our proof. We assume that this MDP is irreducible under all stationary policies. 
From the theory of constrained MDPs \cite{Altman,Ross,Piunov}, we have the following.

\begin{theorem}\label{Ross} 
For a single constraint MDP, there exists an optimal stationary randomized policy $\varphi_*(\cdot|\cdot)$ that needs to randomize in at most one state, between at most two control choices. \end{theorem}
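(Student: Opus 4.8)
The plan is to pass to the linear programming formulation of the constrained average-reward MDP in terms of occupation measures and exploit the fact that an optimal vertex (basic feasible solution) of the feasible polytope has few nonzero components. Concretely, I would introduce variables $\rho(x,u)\ge 0$ for $x\in\X$, $u\in\U=\{0,1\}$, interpreted as long-run state--action frequencies. Under the standing assumption that the (single-arm) MDP is irreducible, hence unichain, under every stationary policy, the set of achievable occupation measures is exactly the polytope $\Delta$ cut out by $\rho\ge 0$, the normalization $\sum_{x,u}\rho(x,u)=1$, and the balance equations $\sum_u\rho(y,u)=\sum_{x,u}\rho(x,u)p(y|x,u)$ for all $y\in\X$; moreover every $\rho\in\Delta$ has strictly positive marginals $\bar\rho(x):=\sum_u\rho(x,u)>0$ for all $x$ and induces the stationary policy $\varphi_\rho(u|x):=\rho(x,u)/\bar\rho(x)$, which realizes it. Adjoining the single side constraint $\sum_x\rho(x,1)=M/N$ and maximizing the linear objective $\sum_{x,u}\rho(x,u)\,r(x,u)$ gives an LP equivalent to the single-constraint version of $(P_1)$; this is the content of \cite{BorkarConstrained} (see also \cite{BorkarTopics}, Section VII.1).

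Next I would invoke the structure of optimal LP solutions. The feasible set is nonempty (e.g., the uniformly randomized policy $\varphi(1|\cdot)\equiv M/N$ yields $\sum_x\rho(x,1)=M/N$) and is a bounded polytope, so the linear objective attains its maximum at a vertex $\rho_*$. The ambient space has dimension $2|\X|$, while the equality constraints number $|\X|+1$: one normalization, one side constraint, and $|\X|-1$ independent balance equations (they sum to a tautology, since $\sum_y p(y|x,u)=1$). Hence at a vertex at least $2|\X|-(|\X|+1)=|\X|-1$ of the inequalities $\rho(x,u)\ge 0$ are active, i.e., $\rho_*$ has at most $|\X|+1$ nonzero components. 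Since $\bar\rho_*(x)>0$ for each of the $|\X|$ states, every state already contributes at least one nonzero component $\rho_*(x,u)$; as there are at most $|\X|+1$ nonzero components in all, at most one state $x_*$ can have both $\rho_*(x_*,0)>0$ and $\rho_*(x_*,1)>0$. At every other state $\varphi_{\rho_*}(\cdot|x)$ concentrates on a single action, while at $x_*$ it randomizes between the two actions of $\U$; thus $\varphi_*:=\varphi_{\rho_*}$ is an optimal stationary randomized policy that randomizes in at most one state, between at most two control choices.

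The main obstacle is the first paragraph: the clean identification of $\Delta$ with the set of (occupation measures of) stationary policies and, in particular, the claim that an optimal vertex has all marginals strictly positive. This is precisely where irreducibility under all stationary policies enters — without it a vertex could correspond to an occupation measure supported on a proper recurrent class, the induced policy would be undefined off that class, and the dimension count above would no longer force randomization at a single state. Everything after this identification is the standard basic-feasible-solution argument, so I would spend most of the write-up carefully stating the occupation-measure/policy correspondence and the redundancy of one balance equation, then dispatch the counting in a couple of lines.
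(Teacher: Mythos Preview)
The paper does not supply its own proof of this theorem; it simply cites it as a known fact from the constrained MDP literature (\cite{Altman,Ross,Piunov}). Your occupation-measure LP argument is precisely the classical proof found in those references (in particular Beutler--Ross \cite{Ross}): the vertex/basic-feasible-solution count with $|\X|+1$ independent equality constraints against $2|\X|$ variables, combined with irreducibility forcing all state marginals to be positive, is correct and is the standard route. Your identification of the one delicate point --- that without irreducibility a vertex could be supported on a proper recurrent subclass and the counting would break --- is exactly right and is where the paper's standing assumption enters.
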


\begin{remark} Defining `occupation measures' $\mu(\cdot,\cdot)$ for a Markov chain controlled by a stationary randomized policy by $\mu(i,u) :=$ the stationary probability of $\{X_n = i, U_n = u\}$ for $i \in \X, u \in \U$, such $\mu$ form a bounded convex polytope $C_1$ whose extreme points correspond to stationary non-randomized policies. A constraint of the form $\sum_{i,u}\mu(i,u)f(i,u) \leq C$ for some $f: \X\times\U \to \mathbb{R}$ and $C \in \mathbb{R}$ defines another convex set of probability measures $C_2$. Using a theorem of Dubins \cite{Dubins}, it can be shown that the extreme points of $C_1\cap C_2$ correspond to the $\mu$ that can be expressed as a convex combination of at most two stationary non-randomized policies and can be realized by at most one randomization between two control choices at at most one state. This approach to the result of \cite{Ross} is taken in \cite{BorkarConstrained}, \cite{BorkarConstraint}. The latter also gives an extension to the case of multiple constraints. From this viewpoint, it is clear that this optimal policy may be non-unique if $C_2$ just touches $C_1$ along one of its faces of dimension $1$ or more, in which case the set of optimal $\mu$ is a convex set that is not a singleton. However, this situation is non-generic because it can be destroyed by a small perturbation of the function $f$.
\end{remark}

\subsection{Proof of Asymptotic Optimality}

Since $|\U| = 2$ in our case, the latter `at most' in Theorem~\ref{Ross} means that it needs to randomize at most in one state, say $x_*$,  between $0$ and $1$ with probabilities $1-\beta,\beta$ resp., for some $\beta\in [0,1]$. Let $\A:=\{x \in \X : \varphi_*(1|x)=1\}$ denote the set of states that are always active under this policy. Let $\pi_*$ denote the unique stationary distribution of $\{X_n\}$ under the optimal policy. Since the constraint \eqref{constraint} is met under the optimal policy $\varphi_*$, we have
\begin{equation}
\pi_*(\A) + \beta\pi_*(\{x_*\}) = \alpha := \frac{M}{N}. \label{optconstraint}
\end{equation}
In the following, we shall compare two policies:
\begin{enumerate}
\item the relaxed optimal policy, denoted by OPT and given by $\varphi^*$, and,
\item the Lagrangian index policy, LIP, that renders active at time $n$ the chains $i, 1 \leq i \leq N,$ with the top $M$ values of $\gamma(X^i_n)$, breaking any ties with equal probability.
\end{enumerate}
We are interested in the $N\uparrow\infty$ limit.  

Recall the definition of $\alpha$ in the original set up. From now on, we take $\alpha$ as a prescribed  fraction in $(0,1)$ and define:\\

\noindent  $M_N :=$ the smallest integer exceeding $\alpha N$, \\

\noindent $m_N := M_N-1 =$ the largest integer not exceeding $\alpha N$. \\

Replace $M$ by $M_N$, resp., $m_N$,  in the foregoing. Clearly,
$$\alpha_N := \frac{M_N}{N} \to \alpha, \ \alpha_N' := \frac{m_N}{N} \to \alpha \ \mbox{as} \ N\uparrow\infty.$$
Under OPT, the chains are independent of each other. Let $\epsilon > 0$. Under OPT, \eqref{optconstraint} and the strong law of large numbers imply that
$$\lim_{N\to\infty}\frac{1}{N}\sum_{i=1}^N\Big(I\{X^i_n\in \A\} + I\{X^i_n=i^*, U^i_n=1\}\Big) = \alpha \ \mbox{a.s.} $$
for all $n \geq 0$. \\

Under LI, the chains are \textit{not} independent, but they form an exchangeable family. This is because, by our choice of the index rule and the tie-breaking protocol, their joint laws are invariant under permutations. We shall work directly with the infinite product chain $[X^1_n, X^2_n, \cdots]$, $n \geq 0$, in $\X^\infty$.\\

Henceforth, we work with the infinite product chain, i.e., $N = \infty$.\\

Number the states according to decreasing values of the Lagrangian  index $\gamma(j) := Q_{\lambda^*}(j,1) - Q_{\lambda^*}(j,0)$. Let 
\begin{eqnarray*}
\X^+ &:=& \{j\in \X: \gamma(j) > 0\}, \\ 
\X^0 &:=&  \{j\in \X: \gamma(j) = 0\},\\
\X^-  &:=&  \{j\in \X: \gamma(j) < 0\}.
\end{eqnarray*} 
By Theorem \ref{Ross}, we can choose an optimal policy such that $|\X^0| \leq 1$. We do so henceforth. The following is immediate from Theorem \ref{Ross}.

\begin{theorem}\label{singlechain} For a single chain, the optimal control will be assigned as per the following procedure:\\

\noindent {\rm\textbf{Procedure 1:}}
\textit{Assign $1$ for $j \in \X^+$, $0$ for $j\in \X^-$, and $1$ with probability $\beta$ and $0$ with probability $1-\beta$ for $i\in \X^0$ if $|\X_0| = 1$. Here $\beta \in [0,1]$ is such that if $\pi$ denotes the stationary distribution of the chain, then $\pi(\X^+) + \beta\pi(\X^0) = \alpha$. } 
\end{theorem}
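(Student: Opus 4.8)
The plan is to read off Procedure 1 from two ingredients already set up in the excerpt: the Bellman characterization of the optimal policies for the Lagrangian‑relaxed unconstrained MDP $(P_2)$ at the multiplier $\lambda_*$, and the strong duality for single‑constraint MDPs (\cite{Altman,Ross,Piunov,BorkarConstrained}) that was used to pass from $(P_1)$ to $(P_2)$ and that underlies Theorem \ref{Ross}. Recall in particular that by the symmetry argument just above, $\lambda_*$ is also the optimal Lagrange multiplier for the single‑arm constrained MDP under consideration here.

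First I would record the dynamic‑programming characterization of $(P_2)$‑optimality. For the average‑reward MDP with running reward $r(x,u) + \lambda_*(I\{u=1\}-\alpha)$, under the standing irreducibility assumption every state is recurrent under every stationary policy, so a stationary (possibly randomized) policy $\varphi$ is optimal for $(P_2)$ if and only if $\varphi(\cdot\mid j)$ is supported on $\arg\max_{u\in\U} Q_{\lambda_*}(j,u)$ for every $j\in S$. Since $\gamma(j) = Q_{\lambda_*}(j,1) - Q_{\lambda_*}(j,0)$, this says exactly that $\varphi(1\mid j)=1$ on $S^+$, $\varphi(1\mid j)=0$ on $S^-$, and $\varphi(\cdot\mid j)$ is arbitrary on $S^0$; i.e.\ every $(P_2)$‑optimal stationary policy agrees with $j\mapsto I\{j\in S^+\}$ off $S^0$, and $|S^0|\le 1$ by Theorem \ref{Ross}.

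Second, by strong duality (no duality gap for the single‑constraint MDP, optimum attained by Theorem \ref{Ross}), there is a policy that is simultaneously optimal for the constrained problem and for $(P_2)$ at $\lambda_*$ and meets the constraint with equality; equivalently, the optimal constrained occupation measure lies on the face of $(P_2)$‑optimal occupation measures, hence is a convex combination of the deterministic ones, which by the previous paragraph all coincide off $S^0$. When $S^0=\emptyset$ there is thus a unique $(P_2)$‑optimal stationary policy, $j\mapsto I\{j\in S^+\}$, which is then constrained‑optimal and so automatically satisfies $\pi(S^+)=\alpha$. When $S^0=\{x_*\}$ there are at most two such deterministic policies, differing only in the action at $x_*$, so their mixture is realized by the stationary policy playing action $1$ at $x_*$ with probability $\beta\in[0,1]$, in agreement with Theorem \ref{Ross}. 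The value of $\beta$ is pinned down by the constraint: writing $\pi_\beta$ for the (unique) stationary distribution under this policy and $h(\beta):=\pi_\beta(S^+)+\beta\,\pi_\beta(S^0)$ for the stationary probability that the arm is active, $h$ is continuous on $[0,1]$ (the kernel is affine in $\beta$, and the stationary law of an irreducible finite chain depends continuously on its kernel), and constrained‑MDP existence guarantees $h$ attains the value $\alpha$; any such $\beta$ is the one in the statement, and $h(\beta)=\alpha$ is exactly \eqref{optconstraint}. This yields Procedure 1.

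The step I expect to be the main obstacle is justifying rigorously that the single randomizing state furnished by Theorem \ref{Ross} may be taken in $S^0$ rather than being some other state. This is precisely where one combines strong duality (an optimal constrained policy is $(P_2)$‑optimal at $\lambda_*$) with the Bellman characterization above, which forces any $(P_2)$‑optimal policy to be deterministic off $S^0$; hence any randomization in an optimal constrained policy is confined to $S^0$, and since $|S^0|\le 1$ it is a randomization between the two actions of a single state. I would present this in the occupation‑measure / linear‑programming language, where the assertion ``the optimal constrained occupation measure is a mixture of at most two deterministic $(P_2)$‑optimal occupation measures differing only at one state of $S^0$'' is the most transparent.
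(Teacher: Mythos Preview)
Your proof is correct and follows the approach the paper implicitly intends: the paper's entire proof is the single sentence ``The following is immediate from Theorem~\ref{Ross},'' and what you have written is precisely the natural unpacking of that assertion via strong duality (so that a constrained optimum is $(P_2)$-optimal at $\lambda_*$) together with the Bellman characterization forcing determinism off $S^0$. In particular, your flagged ``main obstacle'' --- that the randomizing state of Theorem~\ref{Ross} may be taken in $S^0$ --- is exactly the step the paper sweeps under ``immediate,'' and your justification of it via the face of $(P_2)$-optimal occupation measures is the standard and correct way to make this rigorous.
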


Let $X^i_n, i\ge 1,$ denote the current states of the arms. Order the arms as $1, 2,...,$ according to decreasing values of $\gamma(X^i_n)$'s. We follow the next procedure, which is the counterpart of `Procedure 1' above for the infinite sequence: \\

\noindent \textbf{Procedure 2:} \textit{Club together the $X^i_n$'s with equal indices into a single group. Rank these groups according to decreasing $\gamma(X^i_n)$. 
Define $\xi(k)$ as the asymptotic fraction of the number of chains belonging to the group with the index $\gamma(k)$.
This is well defined by the strong law of large numbers for exchangeable random variables.
Assign control $1$ to the top $\ell^*$ such groups, where\\
}
$$\ell^* := \max\{k \geq 1: \sum_{l\le k} \xi(l) < \alpha \}.$$

\textit{We assign control $1$ to each chain in state $\ell^*+1$ in i.i.d.\ fashion with probability $\beta$.
Then, for chains in group $\ell^*+1$, an asymptotic fraction $\kappa := \alpha - \sum_{l\le \ell^*} \xi(l)$ 
is assigned control $1$, the rest are assigned the control $0$ in an exchangeable manner. The remaining groups are assigned control $0$.}\\

%
%

\begin{theorem}\label{indep} The arms are asymptotically independent (as $N\uparrow\infty$) under the above Procedure 2 and the Global Attractor Hypothesis \textbf{$(\dagger)$}, i.e., for any fixed $K$ and the arms $\{X^i_n, 1 \leq i \leq K \}$, $E\left[\prod_{i=1}^Kf_i(X^i_n)\right] = \prod_{i=1}^KE\left[f_i(X^i_n)\right]$ \ $\forall \ f_i\in C_b(\X), 1 \leq i \leq K < \infty$, $\forall n$.
\end{theorem}

\begin{proof}
Recall that for each $i \geq 1$, $X^i_n, n \geq 0,$ is a controlled Markov chain controlled by the process $U^i_n, n \geq 0$.
Let $\E$ denote the $\sigma$-field of exchangeable events. Conditioned on $\E$,  $\{(X^i_n, U^i_n), n \geq 0\}, i \geq 1,$ are independent processes by de Finetti's theorem. Then almost surely,  the random probability measure $\nu_n \in \PA(\X\times\U)$ defined by
$$\nu_n(j,u) := \lim_{N\to\infty}\frac{1}{N}\sum_{i=1}^NI\{X^i_n=j, U^i_n = u\}$$
is well defined by the strong law of large numbers for exchangeable random variables, viz., Corollary \ref{SLLNX}. 
Necessarily, the random variable $\nu_n \in \PA(\X\times\U)$ is $\E$-measurable. \\

We shall write any $\kappa \in \PA(\X\times\U)$ as
$\kappa(i,u) = \bar{\kappa}(i)\varphi_\kappa(u|i) , \ i\in \X, u \in \U.$ Also define the corresponding transition probabilities as $p_\kappa(j|i) = \sum_up(j|i,u)\varphi_\kappa(u|i), \ i,j \in \X.$ In particular, disintegrating $\nu_n(x\times u)$ as $\bar{\nu}_n(x)\varphi_\nu(u|x)$, the conditional law $\varphi_\nu( \cdot | \cdot)$ is completely specified by Procedure~2 given $\bar{\nu}_n$. In particular, this allows us to write $p_{\nu_n}(j|k)$ as $p_{\bar{\nu}_n}(j|k)$ by slight abuse of notation. 
Thus, we can define
\begin{equation}
P(\bar\nu_n)= [p_{\bar\nu_n}(j|k)]_{j,k}. \label{newnotation}
\end{equation}
This notation will be important in what follows.\\

Suppose we apply Procedure 2 above. 
The procedure implies that the transition probabilities of the arms are coupled only through the dependence of the control policy on $\bar{\nu}_n$. To see this, note that a control $u \in \{0,1\}$ is assigned to $i$ by Procedure 2 depending solely on the asymptotic fraction of states (as $N\uparrow\infty$) belonging to $\gamma^{-1}(u)$ and an independent randomization if needed. Thus, the outcome of the procedure depends solely on the probability vector $\xi$ in Procedure 2, which at time $n$, corresponds to $\bar{\nu}_n$. The latter in turn is clearly $\E$-measurable. Therefore, conditioned on $\E$, for each $i$, $\{X^i_n\}$ is a Markov chain with the transition probability matrix independent of $i$ (since they are conditionally i.i.d.), but a function of $\bar{\nu}_n$ which is $\E$-measurable. \\

In addition, a reasoning analogous to the one used for defining $\nu_n$ also shows that for any $k \in \X$,
\begin{eqnarray*}
0 &=&\lim_{N\uparrow\infty}\frac{1}{N}\sum_{i=1}^N\Big(I\{X^i_{n+1} = k\} - P(X^i_{n+1} = k|X^i_n, U^i_n)\Big) \\
&=& \lim_{N\uparrow\infty}\left(\frac{1}{N}\sum_{i=1}^NI\{X^i_{n+1} = k\} - \frac{1}{N}\sum_{i=1}^N\sum_{j\in S}p_{\bar{\nu}_n}(k|j)I\{X^i_n=j\}\right) \\
&=& \bar{\nu}_{n+1}(k) - \sum_{j\in S}p_{\bar{\nu}_n}(k|j)\bar{\nu}_n(j)  \   \mbox{a.s.}, 
\end{eqnarray*}
where the first equality follows from the strong law of large numbers for martingales with bounded increments. 
Thus $\{\bar{\nu}_n\}$ satisfies the iteration
\begin{equation}
\bar{\nu}_{n+1} = \bar{\nu}_nP(\bar{\nu}_n), \ n \geq 0. \label{nuiteration}
\end{equation}

Under our \textit{Global Attractor Hypothesis} \textbf{$(\dagger)$},  $\bar{\nu}_n \to \nu^*$ a.s., where $\nu^*$ is the unique deterministic fixed point of the map $\Psi$. Hence at stationarity (i.e., as $n\to\infty$), the $\{(X^i_n, U^i_n), n \geq 0\}$ for different $i$'s are coupled only through $\nu_*$ which is deterministic. This implies their independence.
\end{proof}

\begin{theorem} Procedure 2 is optimal with respect to stationary policies for the infinite arms with the reward
$$E\left[\lim_{n\to\infty}\frac{1}{n}\sum_{m=0}^{n-1}\left(\lim_{N\to\infty}\frac{1}{N}\left(\sum_{i=1}^{N}r^i(X^i_m, U^i_m)\right)\right)\right].$$ 
\end{theorem}

\begin{proof} By Theorem \ref{indep}, the arms are asymptotically independent under Procedure 2.  Also, the  average reward with respect to  $n\uparrow\infty$ limit is well defined by the strong law of large numbers in view of Theorem \ref{indep}. This allows us to work with the infinite product chain, i.e., with $N = \infty$, henceforth. The marginal laws of $\{(X^i_n, U^i_n)\}$, $n \geq 0$, are independent of $i,n$ at stationarity, i.e., as $n\to\infty$, by the preceding theorem.  Therefore the expected average rewards for all arms are equal. From the construction, for each arm in isolation, the procedure reduces to Procedure 1 which is optimal by choice. The claim follows. \end{proof}

We have so far been considering the infinite arms case directly. Our final result justifies why it is the correct limiting case for $N$ bandits with the Lagrangian index policy, as $N\to\infty$.

Recall that as $N$ varies, it does not make sense to speak of making $\alpha N$ arms active and the rest passive, because $\alpha N$ may not be an integer. Therefore, as $N$ increases, we consider the problem of choosing $M_N := \lceil\alpha N\rceil$ arms active and the rest passive. A similar treatment is possible with $m_N := \lfloor\alpha N\rfloor$ or a prescribed combination of the two options. 

First we make precise the sense in which we take this limit. Bandits with $N$ arms specify a probability measure on $(\X\times\U)^N$, say $\zeta_N$. For $1 \leq K \leq N \leq \infty$, denote by $\zeta_{N,K}$ its marginal on the first $K$ copies of $\X\times\U$, i.e., on $(\X\times\U)^K$.  We say that $\zeta_N \in \PA((\X\times\U)^N) \to \zeta_\infty \in \PA((\X\times\U)^\infty)$ if for each $K \leq N$, $\zeta_{N,K} \to \zeta_{\infty,K}$ in $\PA((\X\times\U)^K)$.
    
We are now in a position to state and prove our final result.

\begin{theorem} As $N \to \infty$, the laws of $N$-armed bandits with the Lagrangian index policy converge to the set of laws of bandits with infinite arms that follow Procedure 2 in the sense defined above.
\end{theorem}
\begin{proof} Fix $n \geq 1$. Recall that the Lagrangian index policy for $N < \infty$ bandits is as follows.  At any time instant, rank the observed states of the bandits according to decreasing values of their Lagrangian indices, breaking any ties uniformly. Then render the top $M_N$ of them active, the rest passive. We shall now analyze the $N\to\infty$ limit of this scheme. For this, fixe $1 \leq M < \infty$ and consider the restrictions of the laws of restrictions the infinite product chains to the product of the first $M$ factor spaces. Since the latter space is compact, the probability measures on this space for each fixed $M$ are tight and therefore converge along a subsequence. By a standard diagonal argument, we pick a further subsequence along which this is true for all $M \geq 1$. 

Next, note the following:
\begin{enumerate}

\item By construction, the fraction of active arms, i.e., $\frac{M_N}{N} \to \alpha$ sample pathwise.

\item Exchangeability is preserved in the limit.    

\item For each $N$, the Lagrangian index of any active arm is $\geq$ the Lagrangian index of any passive arm. This property is preserved in the limit.
\end{enumerate}
It is easy to see that the three together lead to Procedure 2 in the limit. 
\end{proof}

\begin{remark} We have passed to the $N\to\infty$ limit first in order to facilitate working with the infinite product chain, which facilitates the use of exchangeability. 
 Note that we do not claim uniqueness of the limit law as $N\to\infty$.
The argument that follows is concerned largely with $n\to\infty$, i.e., the approach of the infinite product chain to stationarity, where the Global Attractor Hypothesis allows us to establish asymptotic independence. The uniqueness of the limiting law as $n \to \infty$ crucially depends on the fact that $\nu_n \to$ a deterministic limit $\nu^*$, as already seen. This proof strategy allows us to avoid persistent  $N\to\infty$ limits throughout the argument once they are put out of the way. It is, however, possible to keep finite $N$ in sight throughout, thanks to the limit theorem proved in \cite{Diaconis}, but then the arguments will be far more cumbersome. 
\end{remark}

\section{Conclusion}
\label{sec:concl}

We have proposed and analyzed the Lagrangian index for average cost restless bandits. 
It is computationally more tractable than Whittle index when the latter is not explicitly known 
and does not require Whittle indexability. We have calculated analytically the Lagrangian index
for the restart model. We have provided several numerical experiments. LIP is seen to perform 
comparably well on problems that are Whittle indexable and as well or better on problems that are not. 
We also prove its asymptotic optimality in the infinite arms limit under the popular `global 
attractor hypothesis' using a novel argument based on exchangeability and de Finetti's theorem.

\bigskip

\section{Appendix: Overview of Stochastic Approximation}

Stochastic approximation,  introduced by Robbins and Monro \cite{Robbins},  refers to the iteration in $\R^d, d \geq 1,$ given by\footnote{This is not the most general formulation, but it will suffice for our purposes.}
\begin{equation}
x(n+1) = x(n) + a(n)\left[h(x(n)) + M(n+1) + \varepsilon(n+1)\right], \ n \geq 0. \label{SA0}
\end{equation}
Here $\{M_n\}$ is a square-integrable martingale difference sequence in $\R^d$ with respect to the increasing $\sigma$-fields $\F_n := \sigma(x(0), M(k), 1 \leq k \leq n)$ for $n \geq 0$. That is,   $E\left[M(n+1)|\F_n\right] = 0$ componentwise $\forall \ n \geq 0$.
Assume:
\begin{enumerate}
\item $h: \R^d \to \R^d$ is Lipschitz,

\item $\{M_n\}$  satisfies
\begin{equation}
E\left[\|M(n+1)\|^2|\F_n\right] \leq K\left(1 + \|x(n)\|^2\right) \ \forall n \label{mgbound}
\end{equation}
for some constant $K > 0$,

\item $\{\varepsilon(n)\}$ is a bounded random sequence in $\R^d$ adapted to $\{\F_n\}$ satisfying $\varepsilon(n) \to 0$ a.s.
\end{enumerate}

Below, we shall consider the classical framework above, the `two time scale stochastic approximation', and asynchronous stochastic approximation, in that order.

\begin{enumerate}
\item \textbf{Single time scale stochastic approximation}\\

We shall assume that a.s.,
\begin{equation}
\sup_n\|x(n)\| < \infty. \label{supbound}
\end{equation}
That is, the iterates remain bounded a.s., with a possibly random bound. This is usually ensured  by using problem-specific techniques such as a suitable stochastic Liapunov argument. 
One approach to analyze this iteration is the ODE  (for `Ordinary Differential Equations') approach, developed over the years by many authors \cite{Benaim, Der, Ljung, Metivier}  which treats (\ref{SA0}) as a noisy discretization of the ODE
\begin{equation}
\dot{x}(t) = h(x(t)) \label{ODE0}
\end{equation}
and argues that the iterates $\{x(n)\}$ a.s.\ track the asymptotic behaviour of $x(t)$ as $t\to\infty$. The argument goes as follows. Define the \textit{algorithmic time scale} as follows: Define $t(n), n \geq 0,$ by setting $t(0) = 0$ and $t(n+1) = t(n) + a(n), n \geq 0$. Define the interpolated iterates $\bar{x}(t), t \geq 0,$ by $\bar{x}(t(n)) = x(n), n \geq 0,$ with linear interpolation on $[t(n), t(n+1)], n \geq 0$. Fix $T > 0$. Let $x^s(t), t \in [s, s + T],$ the solution to (\ref{ODE0}) with initial condition $x^s(s) = \bar{x}(s)$ for $s \geq 0$. Then one can show that a.s.,
\begin{equation}
\lim_{s\to\infty}\sup_{t \in [s,s+T]}\|\bar{x}(t) - x^s(t)\| \to 0. \label{ODElim} 
\end{equation}
This is proved as follows. The cumulative errors in the above ODE approximation on the interval $[s,s+T]$ are of three types:
\begin{itemize}
\item  Those that are due to discretization  tend to zero a.s.\ because they are 
$$O\left(\sum_{k \geq t(n)}a(k)^2\right).$$ 

\item Those due to $\{\varepsilon(n)\}$ tend to zero a.s.\ because $\varepsilon(n)$ does.

\item Those due to the martingale difference noise $\{M(n)\}$ go to zero because 
\begin{equation}\label{vanishingtail}
\sum_{k\geq n}a(k)M(k+1) \to 0,
\end{equation}
a.s. The latter is a consequence of the martingale convergence theorem which ensures that under (\ref{mgbound}) and (\ref{supbound}), the martingale $Z(n) := \sum_{m=0}^na(m)M(m+1)$ converges a.s.\ as $n\to\infty$. 
\end{itemize}
Then (\ref{ODElim}) follows by a simple application of the Gronwall inequality.

From (\ref{ODElim}), one can deduce that a.s., $x(n) \to$ an \textit{internally chain transitive} invariant set of (\ref{ODE0}) \cite{Benaim}. This is the most general characterization of the asymptotics of (\ref{SA0}), see \textit{ibid.} for the definitions and further details. We shall deal with the simplest case - for us this set will be an equilibrium point.

\item \textbf{Two time scale stochastic approximation}\\

With the above backdrop, we now describe the two time scale stochastic approximation \cite{BorkarBook} given by
\begin{eqnarray}
x(n+1) &=& x(n) + a(n)\Big[h(x(n),y(n)) + M_1(n+1) \nonumber \\
&& \ \ \ \ \ \ \ \ \ \ \ \  + \ \varepsilon_1(n+1)\Big], \label{SAfast} \\
y(n+1) &=& y(n) + b(n)\Big[g(x(n),y(n)) + M_2(n+1) \nonumber \\
&& \ \ \ \ \ \ \ \ \ \ \ \ + \ \varepsilon_2(n+1)\Big]. \label{SAslow}
\end{eqnarray}
Here $\{x(n)\}, \{y(n)\}$ are iterates in $\R^d, \R^s$ resp., $h: \R^{d+s} \to \R^d$, $g: \R^{d+s} \to \R^s$ are Lipschitz,  $\{M_i(n)\}, i = 1,2,$ are resp.\ $d$ and $s$-dimensional martingale difference sequences with  respect to the $\sigma$-fields $\F_n := \sigma(x(0), y(0), M_i(m), 1 \leq m \leq n, i= 1,2)$, $n \geq 0$, satisfying 
\begin{equation}\label{twoscalebound}
E\left[\|M_1(n+1)\|^2 + \|M_2(n+1)\|^2 | \F_n\right] \leq K\left(1 + \|x(n)\|^2 + \|y(n)\|^2\right)
\end{equation}
for a suitable constant $K > 0$. The $\varepsilon_i(n), n \geq 1, i = 1,2,$ are bounded random variables in $\R^d, \R^s$ resp., adapted to $\{\F_n\}$, that tend to $0$ a.s.\ as $n\to\infty$. We assume the counterpart of (\ref{supbound}):
\begin{equation}\label{a.s.bound}
\sup_n\left(\|x(n)\| + \|y(n)\|\right) < \infty \ \mbox{a.s.}
\end{equation}

Treating $\{a(n)\}$ as the base step size sequence, (\ref{SAslow}) can be rewritten as
$$y(n+1) = y(n) + a(n)\left[\left(\frac{b(n)}{a(n)}\right)g(x(n),y(n)) + \varepsilon_2(n+1) + M_2(n+1)\right].$$
Since $\frac{b(n)}{a(n)} \to 0$, the ODE that will be tracked then is
$$\dot{x}(t) = h(x(t), y(t)), \ \dot{y}(t) = 0.$$
That is, $\bar{x}(t), t \in [s,s+T],$ tracks the ODE (in the sense of (\ref{ODElim}) ) given by
$$\dot{x}^s(t) = h(x^s(t), y^s(s)).$$
Suppose the ODE
$$\dot{x}(t) = h(x(t),y)$$
with $y$ treated as a constant parameter, has a unique asymptotically stable equilibrium $\kappa(y)$ for some Lipschitz $\kappa : \R^s \to \R^d$. Then a technical argument similar to that for (\ref{SA0}) above leads to $$x(n) - \kappa(y(n)) \to 0 \ \mbox{a.s.}$$
Then (\ref{SAslow}) can be rewritten as
\begin{eqnarray*}
y(n+1) &=& y(n) + b(n)\Bigg[g(\kappa(y(n)),y(n)) \nonumber \\
&& \ \ \ \ \ \ \ \ \ \ \ + \ \tilde{\varepsilon}_2(n+1) + M_2(n+1)\Bigg],
\end{eqnarray*}
where
$$\tilde{\varepsilon}_2(n+1) := \varepsilon_2(n+1) + g(x(n), y(n)) - g(\kappa(y(n)), y(n)) \to 0 \ \mbox{a.s.}$$
Suppose the ODE
$$\dot{y}(t) = g(\kappa(y(t)), y(t))$$
has a unique asymptotically stable equilibrium $y^*$. Then by familiar arguments, $y(n) \to y^*$ a.s. Then $x(n) \to \kappa(y^*)$ a.s.

\item \textbf{Asynchronous stochastic approximation}\\

Here the idea is that different components of (\ref{SA0}) are updated by different processors or agents, each equipped with its own clock. We shall stick to the special case here, wherein only one, viz., the $(Z_n)$-th component is  updated at time n. Then
$$\nu(i,n) := \sum_{m=0}^nI\{Z_m = i\}, \ n \geq 0,$$
denotes the local clock on which the $i$-th component is updated. The key condition we need is that
$$\liminf_{n\to\infty}\frac{\nu(i,n)}{n} > 0 \ \mbox{a.s.}$$
That is, all $i$ are sampled a strictly positive fraction of times. Suppose we replace the step size $a(n)$ by $a(\nu(Z_n,n))$ for all $n$. With this in place, under Assumptions 2.3 and 2.4 of \cite{Abounadi}, one can argue as in \cite{Abounadi} that the iterates asymptotically track the ODE
$$\dot{x}(t) = \frac{1}{d}h(x(t))$$
with $d$ being the dimension of $x(t)$.
This is a time-scaled version of (\ref{ODE0}), i.e., has the same trajectories slowed down by a factor of $d$, therefore with the same asymptotic behaviour. 

The advantage of using $a(\nu(Z_n,n))$ instead of $a(n)$ is that in the distributed implementation, a processor need know only the local clock. Also, when $a(n)$ is used, a slower processor suffers further by having the convergence slowed down further by the faster decay of step sizes. On the flip side, the former entails additional computation for keeping track of the local count, which, however, is often relatively minor.

\end{enumerate}

\section*{Acknowledgements} The authors would like to thank E. Altman and A. Piunovskiy
for helpful discussions about constrained Markov decision processes.

\section*{Statements and Declarations} The authors declare no competing or conflicting interests related to the present work.

\end{document}